\documentclass[11pt,a4paper]{article}

\usepackage[margin=1.2in]{geometry}
\usepackage{amsmath,amssymb,amsthm}
\usepackage{mathtools}
\usepackage{graphicx}
\usepackage{xcolor}
\usepackage{hyperref}
\usepackage{booktabs}
\usepackage{microtype}
\usepackage{natbib}
\usepackage{algorithm}
\usepackage{algpseudocode}
\usepackage{caption}
\usepackage{subcaption}
\usepackage{enumitem}
\usepackage{thmtools}
\usepackage{multirow}
\usepackage{tcolorbox}
\usepackage{hyperref}

\hypersetup{colorlinks=true, linkcolor=blue!70!black, citecolor=green!50!black,
            urlcolor=blue!70!black}

\newtheorem{theorem}{Theorem}[section]
\newtheorem{lemma}[theorem]{Lemma}
\newtheorem{proposition}[theorem]{Proposition}
\newtheorem{corollary}[theorem]{Corollary}

\definecolor{theoblue}{RGB}{37,99,235}
\definecolor{empgreen}{RGB}{5,150,105}
\definecolor{warnorange}{RGB}{217,119,6}

\title{\textbf{Loss-Parameterized Fisher Width Along Learning Trajectories}}

\author{Vu Khac Ky\\[4pt]
\small Department of Mathematics, FPT University, Vietnam\\
\small \texttt{kyvk2@fe.edu.vn}}

\begin{document}
\maketitle

\begin{abstract}
Fisher width measures the Gaussian width of a probe set after
deformation by the local Fisher geometry. We study its evolution along
learning trajectories and ask when training loss can serve as an
effective coordinate for this quantity.

We first derive an exact trace--shape factorization and a deterministic
stability bound for fixed compact probes. In a population
Gaussian-teacher logistic model, the teacher-aligned state is extremal
on every loss level below $\log 2$: it has minimal parameter norm and
maximizes both Fisher trace and Euclidean-ball Fisher width. We then
show that population gradient flow asymptotically selects this branch,
with explicit rates for the aligned and orthogonal coordinates. This
yields, for $d\geq2$,
\[
    \frac{w_F(B_2^d;\theta(t))}
         {\sqrt{L(\theta(t))}}
    \longrightarrow
    \frac{\sqrt6}{\pi}\mathbb E[\chi_{d-1}].
\]

Controlled full-Fisher experiments support the matched-loss branch and
the population predictions. In a nonlinear MLP with a diagonal
model-Fisher approximation, GD and SGD remain close at matched loss,
whereas Adam follows a substantially displaced branch; the fixed probes
tested retain highly similar temporal shapes. These results support a
branchwise, rather than universal, loss parametrization of Fisher width.
\end{abstract}

\section{Introduction}
\label{sec:introduction}

\subsection{Fisher Width Along Learning Trajectories}

The Fisher information matrix defines the local metric of a statistical
model. For a parametric family
$\{p_\theta:\theta\in\Theta\subseteq\mathbb R^d\}$, let
\[
    G(\theta)
    =
    \mathbb E\!\left[
        \nabla_\theta\log p_\theta(X)
        \nabla_\theta\log p_\theta(X)^\top
    \right].
\]
For a nonempty compact probe set $T\subset\mathbb R^d$, its Fisher width
at $\theta$ is
\[
    w_F(T;\theta)
    =
    w\!\left(G(\theta)^{1/2}T\right),
\]
where
\[
    w(A)
    =
    \mathbb E_{g\sim N(0,I_d)}
    \sup_{v\in A}\langle g,v\rangle
\]
is the Gaussian width of $A$ \citep{Ky2026a}. Thus Fisher width measures
the size of a probe after deformation by the local Fisher geometry.

Previous uses of this quantity fix the base point $\theta$. Here the
base point moves along a learning trajectory,
\[
    \theta_0,\theta_1,\ldots,
\]
so Fisher width becomes the dynamical observable
\[
    t\longmapsto w_F(T;\theta_t).
\]
Both the scale and orientation of $G(\theta_t)$ can change during
training, and there is no reason for this observable to have a simple
dependence on iteration time.

We instead ask whether training loss can serve as an approximate
coordinate for Fisher width along selected trajectories:
\[
    w_F(T;\theta_t)
    \approx
    \Psi\!\left(L(\theta_t)\right).
\]
This is weaker than a global identity
\[
    w_F(T;\theta)
    =
    \Psi(L(\theta))
    \qquad
    \text{for all }\theta,
\]
which would require every equal-loss parameter to have the same Fisher
width. The controlled model studied below shows that such a global
identity is false. The useful formulation is branchwise: optimization
may select a region of parameter space on which loss becomes an
effective coordinate for a Fisher-geometric observable.

The aim is therefore not to reduce the full learning dynamics to one
scalar variable. We ask a narrower set of questions: when does a
loss-parametrized Fisher-width branch appear, what geometric and
dynamical mechanisms can produce it, and which parts of this
description persist when the controlled assumptions are relaxed?

\subsection{Matched-Loss Fisher-Width Branches}
\label{sec:intro-phenomenon}

Consider trajectories $\{\theta_t^{(a)}\}$ produced by different
optimizers, initializations, or stochastic perturbations. Rather than
comparing them at the same iteration number, we compare checkpoints at
the same loss. On a shared observed loss interval $\mathcal I$, the
resulting phase portraits may approximately satisfy
\begin{equation}
    w_F(T;\theta_t^{(a)})
    \approx
    a_T\,
    \Psi_b\!\left(L(\theta_t^{(a)})\right),
    \qquad
    L(\theta_t^{(a)})\in\mathcal I.
    \label{eq:intro-branch}
\end{equation}
Here $a_T$ is probe-dependent, while the branch $\Psi_b$ may depend on
the family of trajectories being compared. We use the term
\emph{Fisher-width equation of state} only as a descriptive shorthand
for such a branch relation; no thermodynamic interpretation or
universal loss-only law is assumed.

The empirical starting point is a controlled Gaussian-teacher logistic
regression problem. Full-batch gradient descent, damped natural
gradient descent, noisy gradient descent, and Adam follow closely
related full-Fisher-width curves when compared only over their shared
observed loss intervals. The matched-loss curve is also stable across
the small random initializations considered. These observations suggest
that, in this setting, iteration time contains optimizer-specific
information that is largely removed when the trajectories are
reparametrized by loss.

The same behaviour is not optimizer-independent outside the controlled
model. In a two-hidden-layer MLP, GD and SGD remain close at matched
loss under a diagonal approximation to the model Fisher, whereas Adam
follows a substantially displaced branch. At the same time, the fixed
ball, random-subspace, sparse, and ellipsoidal probes considered in the
experiment have strongly correlated temporal shapes. Thus the nonlinear
experiment identifies optimizer dependence more clearly than probe
dependence within this fixed family. These observations motivate a
branchwise description without suggesting that all optimizers,
architectures, or probe sets share a common law.

A first geometric mechanism follows from the exact factorization
\[
    G(\theta)
    =
    s(\theta)\,\overline G(\theta),
    \qquad
    s(\theta)=\operatorname{Tr}G(\theta),
    \qquad
    \operatorname{Tr}\overline G(\theta)=1.
\]
Positive homogeneity of Gaussian width gives
\begin{equation}
    w_F(T;\theta)
    =
    \sqrt{\operatorname{Tr}G(\theta)}\,
    w\!\left(\overline G(\theta)^{1/2}T\right).
    \label{eq:intro-trace-shape}
\end{equation}
We prove a deterministic perturbation bound showing that small operator
drift of the normalized square-root Fisher matrix controls the second
factor for every fixed compact probe. The bound retains an explicit
probe-dependent sensitivity factor. Hence slowly varying normalized
Fisher geometry can explain similar probe evolution, but it does not
by itself imply that Fisher width is a function of loss.

The Gaussian-logistic population model provides the additional
structure needed to connect geometry with loss. Let
\[
    X\sim N(0,I_d),
    \qquad
    Y=\operatorname{sign}\langle u_\star,X\rangle,
    \qquad
    \|u_\star\|_2=1.
\]
Writing
\[
    \theta
    =
    \alpha u_\star+\beta w,
    \qquad
    w\perp u_\star,
    \qquad
    r=\|\theta\|_2,
\]
the population Fisher matrix depends only on $r$ and has the exact
one-spike form
\begin{equation}
    G(ru)
    =
    b(r)(I-P_u)+a(r)P_u.
    \label{eq:intro-one-spike}
\end{equation}
The supervised loss, in contrast, depends separately on the aligned and
orthogonal coordinates $(\alpha,\beta)$.

This separation yields a global extremal property. Define the aligned
population loss
\[
    L_\star(r)
    :=
    L(ru_\star).
\]
For every state with $L(\theta)<\log 2$,
\begin{equation}
    L(\alpha,\beta)-L_\star(r)
    \geq
    \frac12 b(r)\beta^2.
    \label{eq:intro-global-penalty}
\end{equation}
If
\[
    r_L
    :=
    L_\star^{-1}(L(\theta)),
\]
then $\|\theta\|_2\geq r_L$, with strict inequality away from the
aligned branch. Since the Fisher trace and the Euclidean-ball Fisher
width both decrease with radius,
\begin{equation}
    \operatorname{Tr}G(\theta)
    \leq
    \operatorname{Tr}G(r_Lu_\star),
    \qquad
    w_F(B_2^d;\theta)
    \leq
    w_F(B_2^d;r_Lu_\star).
    \label{eq:intro-envelope}
\end{equation}
Thus the aligned branch is an upper envelope for these Fisher
observables on every low-loss level set. This both identifies a
distinguished loss-parametrized branch and rules out a global
loss-only equality away from it.

The next question is whether learning dynamics actually select this
extremal branch. Population gradient flow satisfies
\begin{equation}
    \dot\alpha
    =
    \mathbb E\!\left[
        Z_1\sigma(-\alpha Z_1-\beta Z_2)
    \right]
    >0,
    \qquad
    \dot\beta
    =
    -\beta\,b(r),
    \label{eq:intro-gradient-flow}
\end{equation}
where $Z_1\sim|N(0,1)|$ and $Z_2\sim N(0,1)$ are independent. We prove
that
\begin{equation}
    \alpha(t)\longrightarrow\infty,
    \qquad
    \beta(t)\longrightarrow0,
    \qquad
    \frac{\alpha(t)}{\|\theta(t)\|_2}\longrightarrow1.
    \label{eq:intro-alignment}
\end{equation}
Moreover,
\begin{equation}
    \frac{\alpha(t)^3}{t}
    \longrightarrow
    \frac{\pi^2}{2\sqrt{2\pi}},
    \label{eq:intro-alpha-rate}
\end{equation}
and, when $\beta(0)\neq0$,
\begin{equation}
    \frac{\log|\beta(t)|}{\alpha(t)^2}
    \longrightarrow
    -\frac{3}{\pi^2}.
    \label{eq:intro-beta-rate}
\end{equation}
Hence population gradient flow asymptotically selects the aligned
branch rather than merely contracting the orthogonal coordinate.

Near this branch, the matched-loss Fisher-width residual is
$O(|\beta|)$ for a general fixed compact probe and $O(\beta^2)$ for
rotation-invariant probes. Combining this stability with dynamical
selection and the aligned low-loss asymptotics gives, for $d\geq2$,
\begin{equation}
    \frac{
        w_F(B_2^d;\theta_t)
    }{
        \sqrt{L(\theta_t)}
    }
    \longrightarrow
    \frac{\sqrt6}{\pi}\,
    \mathbb E[\chi_{d-1}]
    \qquad
    \text{under population gradient flow}.
    \label{eq:intro-dynamic-eos}
\end{equation}
Thus the late loss-parametrized Fisher-width law holds along the actual
population trajectory, not only after restricting the parameter
artificially to the aligned ray.

The numerical experiments later in the paper serve two distinct
purposes. Controlled population calculations test the level-set
envelope, the asymptotic alignment rates, and the dynamic Fisher-width
law. Finite-sample and MLP experiments then examine which parts of the
description remain visible once sampling, nonlinear parametrization,
optimizer dependence, and approximate Fisher geometry are introduced.

\subsection{Contributions and Scope}
\label{sec:intro-contributions}

The contributions are organized around four points.

\begin{enumerate}

    \item \textbf{Matched-loss Fisher-width branches and a trace--shape
    mechanism.}
    We study Fisher width as a dynamical observable and compare
    optimization trajectories at matched loss rather than matched
    iteration. In the controlled logistic experiment, several
    optimizers and small random initializations produce closely agreeing
    full-Fisher-width branches over their shared observed loss ranges.
    In the MLP experiment, GD and SGD remain close under a diagonal
    model-Fisher geometry, while Adam exhibits a substantial
    optimizer-dependent displacement; the fixed probes considered
    nevertheless retain strongly correlated temporal shapes.

    Independently of these experiments, we prove the exact factorization
    \[
        w_F(T;\theta)
        =
        \sqrt{\operatorname{Tr}G(\theta)}\,
        w\!\left(\overline G(\theta)^{1/2}T\right)
    \]
    and a deterministic perturbation bound for every fixed compact
    probe. The result separates Fisher scale from normalized Fisher
    shape and makes the probe-dependent sensitivity explicit.

    \item \textbf{Global extremality of the aligned branch.}
    In the Gaussian-teacher logistic population model, the Fisher
    matrix has an exact one-spike structure. The separation between
    radial Fisher geometry and alignment-dependent loss yields the
    global misalignment penalty
    \[
        L(\alpha,\beta)-L_\star(\|\theta\|_2)
        \geq
        \frac12b(\|\theta\|_2)\beta^2.
    \]
    Consequently, among parameters with the same loss below $\log2$,
    the aligned state has the smallest norm and maximizes both Fisher
    trace and Euclidean-ball Fisher width. The aligned
    loss-parametrized relation is therefore a level-set envelope rather
    than a global identity.

    \item \textbf{Dynamical selection and an asymptotic Fisher-width
    equation of state.}
    For population gradient flow, we derive the exact two-coordinate
    dynamics and prove
    \[
        \alpha(t)\to\infty,
        \qquad
        \beta(t)\to0,
    \]
    together with the asymptotic rates
    \[
        \alpha(t)^3
        \sim
        \frac{\pi^2}{2\sqrt{2\pi}}\,t,
        \qquad
        \log|\beta(t)|
        \sim
        -\frac{3}{\pi^2}\alpha(t)^2
    \]
    when $\beta(0)\neq0$. Combining this dynamical selection with the
    Fisher eigenvalue asymptotics yields
    \[
        w_F(B_2^d;\theta_t)
        \sim
        \frac{\sqrt6}{\pi}\,
        \mathbb E[\chi_{d-1}]
        \sqrt{L(\theta_t)}
    \]
    for $d\geq2$. The loss-parametrized law therefore holds
    asymptotically along the population learning trajectory itself.

    \item \textbf{Finite-sample control and empirical boundaries.}
    We give an exact empirical Fisher-trace identity and pointwise
    concentration bounds under independent Gaussian evaluation data.
    For finitely many recorded checkpoints, the bounds extend by a
    union argument. Controlled numerical experiments test the global
    envelope, the alignment rates, and the late Fisher-width law.

    We then leave the Gaussian linear setting and study a two-hidden-layer
    MLP using a diagonal approximation to the model Fisher. The 
    nonlinear experiment separates two effects: the fixed probes tested
    retain very similar temporal shapes, whereas Adam follows a clearly
    displaced matched-loss branch relative to GD. These results are
    empirical boundary tests and are not used to infer a full-Fisher
    theorem for neural networks.

\end{enumerate}

The theoretical scope is specific. The global envelope and dynamical
selection results concern population logistic regression with isotropic
Gaussian covariates and a noiseless linear teacher. The near-aligned
bounds are uniform on compact aligned-coordinate intervals away from
the singular endpoint of the inverse loss parametrization. The
finite-sample concentration result is pointwise in the evaluated
parameter, or simultaneous over a finite set of checkpoints under an
independent evaluation sample; it is not a trajectory-uniform empirical
process theorem. The nonlinear experiment uses a diagonal approximation
to the model Fisher and does not imply corresponding behaviour for the
full Fisher matrix or for neural networks in general. In particular,
we do not claim that loss globally determines Fisher width, that all
optimizers select the same branch, or that arbitrary probe sets have
the same dynamics.

\subsection{Related Work}
\label{sec:related-work}

\paragraph{Information geometry and natural-gradient methods.}
The Fisher information is the canonical Riemannian metric of a
statistical model and provides the basis of natural-gradient
optimization \citep{Amari1998,AmariNagaoka2000}. For neural networks,
practical curvature approximations include extended Gauss--Newton
methods, Kronecker-factored approximations, and related structured
methods
\citep{PascanuBengio2014,MartensGrosse2015,GrosseMartens2016,
BotevRitterBarber2017,Martens2020}.
The model Fisher and the empirical Fisher should be distinguished:
the latter is not generally an interchangeable curvature matrix
\citep{Kunstner2019}. Most of this literature uses Fisher or related
matrices to construct optimization directions. Here the Fisher matrix
instead defines a geometric observable whose evolution is measured
along the learning trajectory.

\paragraph{Fisher, Hessian, and tangent geometry during training.}
The spectrum of the Fisher information has been studied in random and
wide neural networks \citep{PenningtonWorah2018,Karakida2021}, and
early-training Fisher growth has been related empirically to
optimization and generalization \citep{Jastrzebski2021}. A parallel
literature studies the evolution of Hessian geometry, including
outlying eigenvalues, sharpening, edge-of-stability behaviour, and
large-learning-rate regimes
\citep{Ghorbani2019,Cohen2021,Lewkowycz2020}.
Neural-tangent-kernel and lazy-training analyses identify regimes in
which tangent geometry remains fixed or changes weakly
\citep{Jacot2018,ChizatBach2019,Lee2019}. These matrices can be related
in special models, but the population Fisher, empirical Fisher, loss
Hessian, generalized Gauss--Newton matrix, and NTK are not identical.
The present work focuses specifically on Gaussian width after
deformation by the Fisher metric.

\paragraph{Gaussian width and Fisher width.}
Gaussian width is a standard complexity measure in high-dimensional
probability and Gaussian-process geometry
\citep{Gordon1988,LedouxTalagrand1991,Vershynin2018}. Gaussian and
Rademacher complexities also play a central role in statistical
learning bounds \citep{BartlettMendelson2002}, while Gaussian widths
and related conic quantities characterize random inverse problems and
convex recovery \citep{Chandrasekaran2012,Amelunxen2014}. Fisher width
applies this construction after the local Fisher deformation,
\[
    w_F(T;\theta)
    =
    w\!\left(G(\theta)^{1/2}T\right),
\]
and was introduced as a local complexity measure on statistical
manifolds \citep{Ky2026a}. Related work considers primal and
inverse-Fisher widths and their learning and recovery roles
\citep{Ky2026b}. Those results are formulated at a fixed base point.
The present paper instead studies the quantity along a moving
optimization trajectory and asks when its evolution admits a
loss-based parametrization.

\paragraph{Reduced descriptions and teacher--student dynamics.}
Teacher--student analyses often reduce high-dimensional learning
dynamics to a small collection of overlaps and order parameters
\citep{SaadSolla1995,Goldt2019}. Mean-field limits provide another form
of reduced description in which wide-network training is represented
by the evolution of a parameter distribution
\citep{MeiMontanariNguyen2018}. Strong geometric organization can also
appear late in training, as illustrated by neural collapse
\citep{Papyan2020}. Our two-coordinate Gaussian-teacher reduction is
in this broad tradition, but the object of interest is different: we
study whether one Fisher-geometric observable becomes parametrizable
by loss and whether the dynamics select the corresponding extremal
branch. We do not infer that the complete learning dynamics are
one-dimensional.

\paragraph{Implicit bias and optimizer dependence.}
Different optimization algorithms can select different
parameter-space geometries even when they attain similar loss. Adam
uses a history-dependent coordinatewise preconditioner
\citep{KingmaBa2015}, and differences between adaptive and
non-adaptive methods have been documented in generalization and
solution selection \citep{Wilson2017}. More generally, optimization
geometry can determine implicit bias \citep{Gunasekar2018}. For
logistic-type losses on finite separable datasets, gradient descent
approaches maximum-margin directions despite the absence of a finite
minimizer \citep{Soudry2018,Nacson2019}, with related analyses of risk
and parameter convergence \citep{JiTelgarsky2019}. The aligned branch
studied here is different: it arises from population Gaussian symmetry
rather than from a finite-sample maximum-margin problem. Our dynamical
result concerns the rate at which population gradient flow selects
this branch and the Fisher-width asymptotics induced by that selection.

\paragraph{Position of the present work.}
The ingredients above have largely been studied separately:
information geometry provides the metric, Gaussian width provides the
set complexity, and teacher--student and implicit-bias analyses provide
reduced descriptions of learning trajectories. The present paper asks
how these ingredients interact when Fisher width is evaluated during
optimization. The empirical matched-loss phenomenon motivates a
trace--shape decomposition; the controlled Gaussian-logistic model
then yields a global level-set envelope, proves asymptotic dynamical
selection of that envelope, and gives an explicit late-loss
Fisher-width law. Finite-sample and nonlinear experiments are used to
test which parts of this branchwise description persist when the
controlled assumptions are relaxed.

\section{Dynamic Fisher Width and Matched-Loss Agreement}
\label{sec:empirical}

We begin with the empirical phenomenon that motivates the theory. The
question is not whether loss determines Fisher width throughout
parameter space, but whether trajectories generated within the same
learning problem exhibit similar Fisher width when compared at the
same loss.

For a trajectory $\{\theta_t^{(a)}\}$, we record the phase portrait
\[
    \left(
        \widehat L(\theta_t^{(a)}),
        \widehat w_F(B_2^d;\theta_t^{(a)})
    \right).
\]
If two trajectories follow similar curves after reparametrization by
loss, we regard them as following the same observed matched-loss
branch. This section studies the phenomenon in a controlled logistic
problem using the full Fisher matrix. Experiments outside this setting
are deferred to Section~\ref{sec:beyond-controlled}.

\subsection{Experimental Setup and Matched-Loss Protocol}
\label{sec:setup}

We consider binary logistic regression with
\[
    x_i\sim N(0,I_d),
    \qquad
    y_i=\operatorname{sign}\langle u_\star,x_i\rangle,
\]
where
\[
    d=30,
    \qquad
    n=200,
    \qquad
    \|u_\star\|_2=1.
\]
The empirical logistic loss is
\[
    \widehat L(\theta)
    =
    \frac1n
    \sum_{i=1}^n
    \log\!\left(
        1+\exp(-y_i\langle\theta,x_i\rangle)
    \right).
\]

The teacher and dataset are fixed across all runs. For each of six
initializations,
\[
    \theta_0\sim N(0,0.05^2 I_d),
\]
the same initial parameter is used for all optimizers. We compare
full-batch gradient descent (GD), damped natural gradient descent
(NGD), gradient descent with additive Gaussian perturbations
(Noisy GD), and Adam. GD and NGD use step size $\eta=0.05$, with NGD
update
\[
    \theta_{t+1}
    =
    \theta_t
    -
    \eta
    \left(
        \widehat G(\theta_t)+10^{-4}I_d
    \right)^{-1}
    \nabla\widehat L(\theta_t).
\]
Noisy GD uses
\[
    \theta_{t+1}
    =
    \theta_t
    -
    0.05\,\nabla\widehat L(\theta_t)
    +
    0.005\,\xi_t,
    \qquad
    \xi_t\sim N(0,I_d),
\]
with independent perturbations across iterations. Adam uses learning
rate $10^{-3}$ and the standard parameters
$\beta_1=0.9$, $\beta_2=0.999$, and $\varepsilon=10^{-8}$.
All trajectories are run for $600$ iterations.

For this model, we use the empirical average of the conditional model
Fisher,
\[
    \widehat G(\theta)
    =
    \frac1n
    \sum_{i=1}^n
    v(\langle\theta,x_i\rangle)x_ix_i^\top,
    \qquad
    v(z)=\sigma(z)(1-\sigma(z)).
\]
Since $d=30$, the full $d\times d$ matrix is formed directly; no
diagonal or trace approximation is used. For the Euclidean-ball probe,
\[
    \widehat w_F(B_2^d;\theta)
    =
    \mathbb E_{g\sim N(0,I_d)}
    \left\|
        \widehat G(\theta)^{1/2}g
    \right\|_2.
\]
We estimate this expectation using $B=1200$ Gaussian draws. A single
fixed bank of Gaussian draws is reused across checkpoints, optimizers,
and initializations, so Monte Carlo variation does not contribute to
differences between the recorded phase portraits.

Matched-loss comparisons are restricted to losses observed by both
trajectories. For trajectories $a$ and $b$, define
\begin{equation}
    \mathcal I_{a,b}
    =
    \left[
        \max\{L_{\min}^{(a)},L_{\min}^{(b)}\},
        \min\{L_{\max}^{(a)},L_{\max}^{(b)}\}
    \right].
    \label{eq:shared-loss-range}
\end{equation}
No extrapolation outside $\mathcal I_{a,b}$ is used. The recorded
phase portraits are sorted by loss and interpolated piecewise linearly
on a common grid $\mathcal G\subset\mathcal I_{a,b}$. We write
\[
    Q^{(a)}(\ell)
    =
    \widehat w_F
    \bigl(
        B_2^d;\theta^{(a)}(\ell)
    \bigr)
\]
for the resulting interpolated width.

Using GD as the reference trajectory, let
$\widehat\Psi_{\mathrm{GD}}(\ell)$ denote its interpolated matched-loss
curve and define
\[
    \rho_a(\ell)
    =
    \frac{
        Q^{(a)}(\ell)
    }{
        \widehat\Psi_{\mathrm{GD}}(\ell)
    }.
\]
We summarize the multiplicative displacement over the common loss
range by the geometric mean
\begin{equation}
    \overline\rho_a
    =
    \exp\left\{
        \frac1{|\mathcal G|}
        \sum_{\ell\in\mathcal G}
        \log\rho_a(\ell)
    \right\}.
    \label{eq:matched-loss-ratio}
\end{equation}
A value close to one indicates agreement with the GD reference over
the shared observed loss range. It does not imply equality of the
parameter trajectories or of the Fisher matrices themselves.

\subsection{Matched-Loss Agreement Across Optimizers}
\label{sec:optimizer-comparison}

The full-Fisher experiment shows close matched-loss agreement
among the four methods. Across the six initializations, damped NGD
gives
\[
    \overline\rho_{\mathrm{NGD}}
    =
    1.009\pm0.002,
\]
while Noisy GD gives
\[
    \overline\rho_{\mathrm{noise}}
    =
    0.998\pm0.001.
\]
Adam also remains close to the GD reference on the shared observed
loss range:
\[
    \overline\rho_{\mathrm{Adam}}
    =
    0.995\pm0.003.
\]
Here and below, the reported variation is the sample standard
deviation across the six initializations.

\begin{table}[htbp]
\centering
\caption{Matched-loss full-Fisher-width ratios in the controlled
logistic experiment. Each optimizer is compared with GD only over
their shared observed loss range. Values are mean $\pm$ sample
standard deviation across six initializations.}
\label{tab:optimizer-ratios}
\begin{tabular}{lcl}
\toprule
Optimizer & $\overline\rho$ & Observation \\
\midrule
GD       & $1.000$         & reference \\
NGD      & $1.009\pm0.002$ & close agreement \\
Noisy GD & $0.998\pm0.001$ & close agreement \\
Adam     & $0.995\pm0.003$ & close agreement \\
\bottomrule
\end{tabular}
\end{table}

Thus the four methods need not follow the same trajectory in parameter
space or in iteration time to produce similar Fisher widths at matched
loss. The agreement is not exact---NGD, for example, lies slightly
above the GD reference on average---but the observed displacement is
small relative to the scale of the width.

This experiment does not establish optimizer invariance. It shows only
that, for this controlled problem and initialization regime, the tested
optimizers occupy closely related branches in the
$(\widehat L,\widehat w_F)$ phase portrait. Section~\ref{sec:beyond-controlled}
gives a nonlinear example in which optimizer dependence is much more
pronounced.

\begin{figure}[t]
    \centering
    \includegraphics[width=\textwidth]{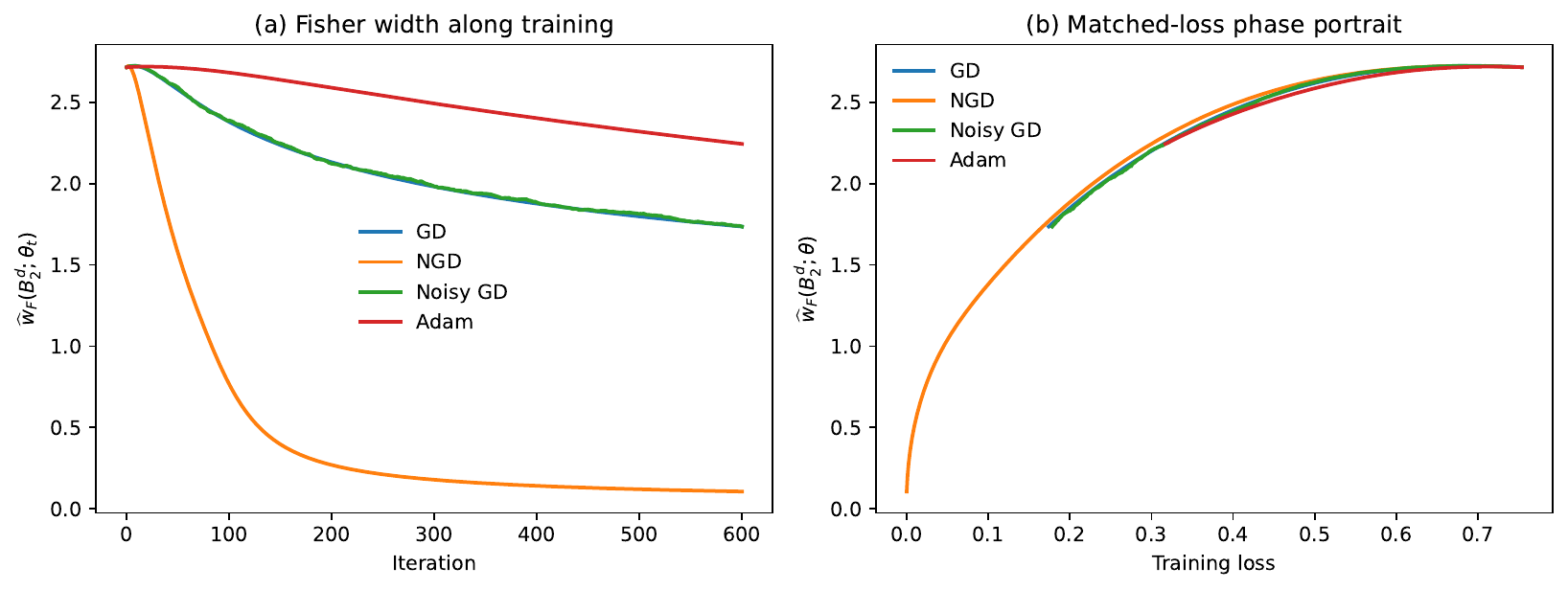}
    \caption{
    Controlled logistic experiment for one predetermined initialization.
    Left: full model-Fisher width along training.
    Right: the same trajectories reparametrized by training loss.
    The matched-loss curves are much more closely aligned than their
    iteration-time trajectories. Numerical summaries over all six
    initializations are reported in Table~\ref{tab:optimizer-ratios}.
    }
    \label{fig:logistic-matched-loss}
\end{figure}

Figure~\ref{fig:logistic-matched-loss} illustrates the same effect for
one predetermined initialization: the optimizer-dependent trajectories
are more closely aligned after reparametrization by loss.

\subsection{Stability Across Initializations}
\label{sec:initialization-stability}

We next isolate initialization variability. The dataset, objective,
optimizer, hyperparameters, and Gaussian width draws are fixed, and
only the initial parameter is changed. For the six GD trajectories,
let
\[
    \operatorname{CV}(\ell)
    =
    \frac{
        \operatorname{sd}_k Q^{(k)}(\ell)
    }{
        \operatorname{mean}_k Q^{(k)}(\ell)
    }
\]
on their common observed loss interval. Averaging over the common loss
grid gives
\[
    \overline{\operatorname{CV}}
    =
    0.00113.
\]
Thus, within the small-initialization regime considered here, the
full-Fisher width is highly reproducible after conditioning on loss.

This is an empirical statement about the tested initialization scale,
not a claim about arbitrary initial conditions. In particular,
equal-loss states need not have equal Fisher geometry. The population
analysis in Section~\ref{sec:gaussian-theory} will show explicitly that
misaligned states can have different norms and different Fisher widths
at the same loss.

\subsection{Branch Estimation and Scope}
\label{sec:nonparametric-eos}

We do not impose a parametric form on the empirical branch. The
reference curve
\[
    \widehat\Psi_{\mathrm{GD}}:
    \mathcal I
    \longrightarrow
    \mathbb R_+
\]
is defined directly by piecewise-linear interpolation of the observed
GD phase portrait. It is used only inside the observed loss range. In
particular, we do not infer a power law, log-linear law, or other
global functional form from the finite-range experiment.

The experiment also does not provide a finite-sample trajectory
theorem. The same fixed dataset is used both to generate the
optimization trajectories and to evaluate the sample Fisher matrix.
The role of this section is therefore exploratory: it identifies a
stable matched-loss phenomenon that the later theory will explain in a
controlled population model. Section~\ref{sec:finite-sample} separately
considers independent evaluation data and gives pointwise and
finite-checkpoint concentration guarantees.

The evidence from the controlled logistic problem can therefore be
summarized narrowly. Several optimization methods and small random
initializations produce closely related full-Fisher-width curves when
compared at matched loss, but the observation by itself does not imply
a global loss-only law. Section~\ref{sec:trace-shape} next separates
Fisher scale from Fisher shape and quantifies the stability of fixed
probes. Section~\ref{sec:gaussian-theory} then identifies the
level-set and dynamical mechanisms that generate a distinguished
loss-parametrized branch in the corresponding population model.

\section{Trace--Shape Factorization and Probe Stability}
\label{sec:trace-shape}

Matched-loss agreement concerns the scalar quantity
$w_F(T;\theta_t)$, but its variation can arise from two different
sources: the overall scale of the Fisher matrix and its normalized
shape. This section separates these effects. The resulting stability
bound is deterministic and applies to any fixed compact probe. It does
not assume a relation between Fisher trace and loss, and therefore does
not by itself produce a loss-parametrized branch.

\subsection{Trace--Shape Factorization}
\label{sec:trace-shape-factorization}

Let
\[
    G_t:=G(\theta_t),
    \qquad
    s_t:=\operatorname{Tr}G_t>0,
\]
and define the trace-normalized Fisher matrix
\[
    \overline G_t
    :=
    \frac{G_t}{s_t},
    \qquad
    \operatorname{Tr}\overline G_t=1.
\]
Since
\[
    G_t^{1/2}
    =
    \sqrt{s_t}\,\overline G_t^{1/2},
\]
positive homogeneity of Gaussian width gives the exact identity
\begin{equation}
    w_F(T;\theta_t)
    =
    \sqrt{s_t}\,
    w\!\left(\overline G_t^{1/2}T\right).
    \label{eq:exact-trace-shape-width}
\end{equation}
Equivalently,
\begin{equation}
    \frac{w_F(T;\theta_t)}
    {\sqrt{\operatorname{Tr}G_t}}
    =
    w\!\left(\overline G_t^{1/2}T\right).
    \label{eq:normalized-fisher-width}
\end{equation}

Thus $\sqrt{s_t}$ is a probe-independent Fisher-scale factor, whereas
\[
    w\!\left(\overline G_t^{1/2}T\right)
\]
contains the interaction between the normalized Fisher geometry and the
probe. In particular, similar Fisher traces do not imply similar
Fisher widths unless the normalized geometry is also controlled.

\subsection{Normalized Square-Root Fisher Shape}
\label{sec:normalized-shape-drift}

Let $\overline G_\star$ be a fixed positive semidefinite reference
matrix with
\[
    \operatorname{Tr}\overline G_\star=1.
\]
We measure normalized shape variation by
\begin{equation}
    \varepsilon_t
    :=
    \left\|
        \overline G_t^{1/2}
        -
        \overline G_\star^{1/2}
    \right\|_{\mathrm{op}}.
    \label{eq:sqrt-shape-drift}
\end{equation}

The square-root matrix is the natural object because Fisher width
depends directly on $G^{1/2}$. Without a positive spectral lower
bound, a perturbation estimate for
$\overline G_t-\overline G_\star$ does not translate at the same scale
into an estimate for their square roots. We therefore formulate the
stability result directly in terms of
$\overline G^{1/2}$.

\subsection{Probe-Stability Bound}
\label{sec:probe-stability-bound}

For a nonempty compact set $T\subset\mathbb R^d$, define
\[
    R_T
    :=
    \sup_{v\in T}\|v\|_2,
    \qquad
    \gamma_d
    :=
    \mathbb E\|g\|_2,
    \qquad
    g\sim N(0,I_d).
\]
Here
\[
    \gamma_d
    =
    \sqrt{2}\,
    \frac{\Gamma((d+1)/2)}{\Gamma(d/2)}
    \leq
    \sqrt d.
\]

\begin{theorem}[Gaussian-width perturbation and probe stability]
\label{thm:probe-stability}
Let $T\subset\mathbb R^d$ be nonempty and compact. For any linear maps
$A,B:\mathbb R^d\to\mathbb R^d$,
\begin{equation}
    |w(AT)-w(BT)|
    \leq
    \gamma_dR_T
    \|A-B\|_{\mathrm{op}}.
    \label{eq:linear-width-perturbation}
\end{equation}
Consequently, if
\[
    a_T
    :=
    w\!\left(\overline G_\star^{1/2}T\right),
\]
then
\begin{equation}
    \left|
        \frac{w_F(T;\theta_t)}{\sqrt{s_t}}
        -
        a_T
    \right|
    \leq
    \gamma_dR_T\varepsilon_t.
    \label{eq:absolute-normalized-probe-bound}
\end{equation}
If $a_T>0$, then
\begin{equation}
    \left|
        \frac{w_F(T;\theta_t)}
        {a_T\sqrt{s_t}}
        -1
    \right|
    \leq
    \frac{\gamma_dR_T}{a_T}\,
    \varepsilon_t.
    \label{eq:relative-probe-bound}
\end{equation}
\end{theorem}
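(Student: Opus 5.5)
The plan is to prove the linear perturbation bound \eqref{eq:linear-width-perturbation} pointwise in the Gaussian vector and then integrate; the two consequences will then follow by specializing $A$ and $B$ and invoking the exact factorization \eqref{eq:normalized-fisher-width}.

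First, fix $g\in\mathbb R^d$ and compare the two suprema. Since $T$ is compact and $v\mapsto\langle g,Av\rangle$ is continuous, $\sup_{v\in T}\langle g,Av\rangle$ is attained and finite. For every $v\in T$,
\[
\langle g,Av\rangle=\langle g,Bv\rangle+\langle g,(A-B)v\rangle\le \sup_{u\in T}\langle g,Bu\rangle+\|g\|_2\|A-B\|_{\mathrm{op}}R_T .
\]
The last step uses Cauchy--Schwarz together with $\|(A-B)v\|_2\le\|A-B\|_{\mathrm{op}}\|v\|_2$. Taking the supremum over $v$ and exchanging the roles of $A$ and $B$ gives
\[
\Bigl|\sup_{v\in T}\langle g,Av\rangle-\sup_{v\in T}\langle g,Bv\rangle\Bigr|\le R_T\|A-B\|_{\mathrm{op}}\|g\|_2 .
\]
Both suprema are bounded in absolute value by $\|g\|_2\max(\|A\|_{\mathrm{op}},\|B\|_{\mathrm{op}})R_T$, which is integrable, so $w(AT)$ and $w(BT)$ are finite. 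Measurability is harmless: on a countable dense subset of $T$ the suprema are the same continuous functions of $g$. Taking expectations, using $|\mathbb E X-\mathbb E Y|\le\mathbb E|X-Y|$ and $\mathbb E\|g\|_2=\gamma_d$, yields \eqref{eq:linear-width-perturbation}.

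Next, for \eqref{eq:absolute-normalized-probe-bound}, take $A=\overline G_t^{1/2}$ and $B=\overline G_\star^{1/2}$. By \eqref{eq:normalized-fisher-width}, $w(AT)=w_F(T;\theta_t)/\sqrt{s_t}$, and by definition $w(BT)=a_T$. Also $\|A-B\|_{\mathrm{op}}=\varepsilon_t$ by \eqref{eq:sqrt-shape-drift}, so the bound follows directly. When $a_T>0$, dividing by $a_T$ gives \eqref{eq:relative-probe-bound}.

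The only step needing care is the pointwise comparison of the suprema: one should avoid subtracting maximizers directly and instead use the one-sided inequality argument above. I would also note that the quoted formula for $\gamma_d$ and the bound $\gamma_d\le\sqrt d$ (by Jensen) are not needed for the proof itself. No real obstacle is expected.
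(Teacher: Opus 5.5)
Your proposal is correct and follows essentially the same route as the paper. The paper also compares the two suprema pointwise in $g$, bounds the difference by $R_T\|A-B\|_{\mathrm{op}}\|g\|_2$, takes expectations, and then specializes to $A=\overline G_t^{1/2}$, $B=\overline G_\star^{1/2}$ via the trace--shape identity. Your finiteness and measurability remarks are careful additions that the paper leaves implicit.
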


The proof is given in Appendix~\ref{app:trace-shape-proofs}.

The relative estimate depends on the probe-sensitivity factor
\begin{equation}
    \chi_T
    :=
    \frac{\gamma_dR_T}{a_T}.
    \label{eq:probe-sensitivity}
\end{equation}
Hence small normalized Fisher-shape drift does not imply the same
relative accuracy for every probe. In particular, a probe with small
reference width $a_T$ can remain sensitive to changes in Fisher
orientation even when $\varepsilon_t$ is small.

The fixed-probe assumption is essential to the statement. An unbounded
cone must first be restricted to a compact set, for example
\[
    \operatorname{cone}(V)\cap B_2^d,
\]
and a sparse probe must include a norm constraint. A probe selected
adaptively from the current Fisher matrix, such as a time-dependent
leading eigenspace, is not covered because both the metric and the
probe then vary with $t$.

Theorem~\ref{thm:probe-stability} is a geometric stability result, not
an equation-of-state theorem. It shows that, for a fixed probe,
\[
    w_F(T;\theta_t)
    \approx
    a_T\sqrt{\operatorname{Tr}G_t}
\]
when the normalized square-root Fisher shape remains close to its
reference value. A loss parametrization requires additional information
linking Fisher scale and the portion of parameter space selected by the
learning dynamics. That additional structure is developed in
Section~\ref{sec:gaussian-theory}.

\subsection{Empirical Shape and Probe Diagnostic}
\label{sec:empirical-shape-diagnostics}

We now evaluate the trace--shape mechanism on the controlled logistic
problem of Section~\ref{sec:empirical}, where the full $30\times30$
empirical model Fisher matrix is available. This experiment is a
diagnostic of the deterministic bound above; it is not used to infer a
loss-only relation.

For one of the recorded GD trajectories, define at every checkpoint
\[
    \widehat G_t
    =
    \frac1n
    \sum_{i=1}^n
    v(\langle\theta_t,x_i\rangle)x_ix_i^\top,
    \qquad
    \widehat s_t
    =
    \operatorname{Tr}\widehat G_t,
\]
and
\[
    \overline{\widehat G}_t
    =
    \frac{\widehat G_t}{\widehat s_t}.
\]
We take as reference shape the average over the recorded checkpoints,
\[
    \overline G_\star
    :=
    \frac1{|\mathcal J|}
    \sum_{t\in\mathcal J}
    \overline{\widehat G}_t,
\]
where $\mathcal J$ denotes the checkpoints on this trajectory. The
corresponding square-root shape diagnostic is
\begin{equation}
    \widehat\varepsilon_t
    :=
    \left\|
        \overline{\widehat G}_t^{1/2}
        -
        \overline G_\star^{1/2}
    \right\|_{\mathrm{op}}.
    \label{eq:empirical-sqrt-shape-drift}
\end{equation}

We consider four fixed compact probes, all normalized to have Euclidean
radius one:
\[
    T_{\mathrm{ball}}
    =
    B_2^d,
\]
\[
    T_{\mathrm{sub}}
    =
    \operatorname{range}(Q)\cap B_2^d,
    \qquad
    Q\in\mathbb R^{d\times 8},
    \qquad
    Q^\top Q=I_8,
\]
\[
    T_{\mathrm{sparse}}
    =
    \left\{
        v:
        \|v\|_0\leq5,\;
        \|v\|_2\leq1
    \right\},
\]
and
\[
    T_{\mathrm{ell}}
    =
    D^{1/2}B_2^d,
    \qquad
    \max_j D_{jj}=1,
\]
where $Q$ and the diagonal ellipsoid $D$ are sampled once and then held
fixed over the trajectory. For the ellipsoid used here,
$D_{jj}\in\{0.05,1\}$, so its Euclidean radius is one. None of these
probes is selected adaptively from the current Fisher matrix.

The Gaussian widths are estimated using the same bank of $1200$
Gaussian draws used in the controlled experiment. Let
$\widehat a_T$ denote the Monte Carlo estimate of
\[
    a_T
    =
    w\!\left(\overline G_\star^{1/2}T\right),
\]
and define the corresponding empirical sensitivity factor
\[
    \widehat\chi_T
    :=
    \frac{\gamma_dR_T}{\widehat a_T}.
\]
Since all four probes have $R_T=1$, this reduces to
$\widehat\chi_T=\gamma_d/\widehat a_T$.

For each checkpoint we also compute
\begin{equation}
    \widehat\Delta_T(t)
    :=
    \left|
        \frac{
            \widehat w_F(T;\theta_t)
        }{
            \widehat a_T\sqrt{\widehat s_t}
        }
        -1
    \right|.
    \label{eq:empirical-probe-residual}
\end{equation}
The empirical counterpart of
Theorem~\ref{thm:probe-stability} is then compared with
\begin{equation}
    \widehat\chi_T\widehat\varepsilon_t.
    \label{eq:empirical-probe-bound}
\end{equation}
Because the widths and $\widehat a_T$ are Monte Carlo estimates,
\eqref{eq:empirical-probe-bound} is used here as a numerical benchmark;
the exact deterministic inequality is
\eqref{eq:relative-probe-bound}.

\begin{table}[htbp]
\centering
\caption{Trace--shape diagnostics for one GD trajectory in the
controlled logistic experiment. The quantities $\widehat a_T$ and
$\widehat\Delta_T$ are estimated with $1200$ shared Gaussian draws.
The last column is the corresponding empirical benchmark from the
probe-stability bound.}
\label{tab:shape-probe-diagnostics}
\begin{tabular}{lcccc}
\toprule
Probe
& $\widehat a_T$
& $\widehat\chi_T$
& $\max_t\widehat\Delta_T(t)$
& $\max_t\widehat\chi_T\widehat\varepsilon_t$ \\
\midrule
$B_2^d$
    & $0.992$ & $5.476$ & $0.0007$ & $0.294$ \\
Random $8$-dimensional subspace
    & $0.493$ & $11.028$ & $0.0032$ & $0.593$ \\
$5$-sparse unit probe
    & $0.748$ & $7.264$ & $0.0017$ & $0.391$ \\
Normalized ellipsoid
    & $0.476$ & $11.401$ & $0.0132$ & $0.613$ \\
\bottomrule
\end{tabular}
\end{table}

The maximum observed square-root shape drift is
\[
    \max_t\widehat\varepsilon_t
    =
    0.0538.
\]
For all four fixed probes, the observed normalized residual is small.
The largest value in Table~\ref{tab:shape-probe-diagnostics} is about
$1.3\times10^{-2}$ for the ellipsoid, while the ball, subspace, and
sparse probes have still smaller residuals.

The Monte Carlo residuals also lie well below the corresponding
quantities
$\widehat\chi_T\widehat\varepsilon_t$ at every recorded checkpoint.
The comparison is therefore consistent with
Theorem~\ref{thm:probe-stability}, but the bound is conservative for
these probes: its worst-case upper values are one to two orders of
magnitude larger than the observed residuals. We do not interpret this
experiment as evidence that the bound is tight, nor as a uniform
stability statement over broader or adaptive probe classes.

The role of this diagnostic is limited but useful. It shows that, along
the controlled GD trajectory, most of the variation in the tested
Fisher widths can be separated into a common trace scale and a slowly
varying fixed-probe shape factor. It does not explain why the trace
should itself be organized by loss. The population Gaussian-logistic
analysis in Section~\ref{sec:gaussian-theory} provides that additional
level-set and dynamical structure.

\section{Controlled Gaussian--Logistic Theory}
\label{sec:gaussian-theory}

We now study a population model in which the interaction between loss,
Fisher geometry, and optimization dynamics can be analysed explicitly.
The purpose is not to obtain a general loss-only description of Fisher
geometry. Instead, the model allows us to identify a distinguished
loss-parametrized branch, prove that this branch is extremal on
low-loss level sets, show that population gradient flow asymptotically
selects it, and derive the resulting late-time Fisher-width law.

Throughout this section, assume $d\geq2$ and
\[
    X\sim N(0,I_d),
    \qquad
    Y=\operatorname{sign}\langle u_\star,X\rangle,
    \qquad
    \|u_\star\|_2=1.
\]
Let
\[
    \ell(z)=\log(1+e^{-z}),
    \qquad
    \sigma(z)=\frac{1}{1+e^{-z}},
    \qquad
    v(z)=\sigma(z)(1-\sigma(z)).
\]
The population loss and model Fisher matrix are
\begin{equation}
    L(\theta)
    =
    \mathbb E\ell\!\left(Y\langle\theta,X\rangle\right),
    \qquad
    G(\theta)
    =
    \mathbb E\!\left[
        v(\langle\theta,X\rangle)XX^\top
    \right].
    \label{eq:population-loss-fisher}
\end{equation}

\subsection{Population Representation and One-Spike Fisher Geometry}
\label{sec:population-fisher-geometry}

Write
\[
    \theta=ru,
    \qquad
    r=\|\theta\|_2,
    \qquad
    \|u\|_2=1,
\]
and, relative to the teacher direction,
\begin{equation}
    \theta
    =
    \alpha u_\star+\beta w,
    \qquad
    w\perp u_\star,
    \qquad
    \|w\|_2=1.
    \label{eq:alpha-beta-decomposition}
\end{equation}
When the orthogonal component is nonzero, $w$ is chosen in its
direction. By rotational symmetry around $u_\star$, the population
gradient field remains in
$\operatorname{span}\{u_\star,w\}$, so this direction is fixed along
the population gradient-flow trajectory.

\begin{lemma}[Two-coordinate signed-margin representation]
\label{lem:two-coordinate-margin}
There exist independent random variables
\[
    Z_1\sim |N(0,1)|,
    \qquad
    Z_2\sim N(0,1),
\]
such that
\begin{equation}
    Y\langle\theta,X\rangle
    =
    \alpha Z_1+\beta Z_2.
    \label{eq:signed-margin-two-coordinate}
\end{equation}
Consequently,
\begin{equation}
    L(\alpha,\beta)
    =
    \mathbb E\ell(\alpha Z_1+\beta Z_2).
    \label{eq:loss-alpha-beta}
\end{equation}
\end{lemma}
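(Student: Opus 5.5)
The plan is to write $X$ in an orthonormal frame adapted to the teacher. Set
\[
    G_1:=\langle u_\star,X\rangle,
    \qquad
    G_2:=\langle w,X\rangle.
\]
Because $u_\star\perp w$ are unit vectors and $X\sim N(0,I_d)$, the pair $(G_1,G_2)$ is standard bivariate normal, so $G_1$ and $G_2$ are independent $N(0,1)$ variables. When $\beta=0$ the direction $w$ is arbitrary; I will fix any unit vector orthogonal to $u_\star$, which is possible since $d\geq2$. The identity to be proved holds for every such choice.

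Expanding the margin with $Y=\operatorname{sign}(G_1)$ gives
\[
    Y\langle\theta,X\rangle
    =
    \alpha\operatorname{sign}(G_1)G_1
    +
    \beta\operatorname{sign}(G_1)G_2
    =
    \alpha|G_1|+\beta\operatorname{sign}(G_1)G_2 .
\]
This holds almost surely, since the event $G_1=0$ has probability zero. Accordingly I define
\[
    Z_1:=|G_1|,
    \qquad
    Z_2:=\operatorname{sign}(G_1)\,G_2 .
\]
Then $Z_1\sim|N(0,1)|$ holds immediately, and the representation \eqref{eq:signed-margin-two-coordinate} holds by construction.

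The only substantive step is to show that $Z_1$ and $Z_2$ are independent with $Z_2\sim N(0,1)$. The key observation is that $G_2$ is independent of $G_1$ and its law is symmetric. Condition on $G_1$. The sign $\operatorname{sign}(G_1)$ is then a fixed element of $\{\pm1\}$, and $G_2$ is still $N(0,1)$. Multiplying a symmetric variable by a fixed sign does not change its law, so the conditional law of $Z_2$ given $G_1$ is $N(0,1)$ whatever the value of $G_1$. Hence $Z_2$ is independent of $G_1$, and therefore of $Z_1$, which is a function of $G_1$. To make this fully formal I would compute the joint characteristic function, or $\mathbb E[f(Z_1)h(Z_2)]$ for bounded test functions $f,h$, by first integrating over $G_2$.

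Finally, the loss formula \eqref{eq:loss-alpha-beta} follows by substituting the representation into $L(\theta)=\mathbb E\,\ell(Y\langle\theta,X\rangle)$. The expectation is finite because $\ell(z)\le\log2+|z|$ and $Z_1,Z_2$ have finite first moments. The result also shows that $L$ depends on $\theta$ only through $(\alpha,\beta)$.
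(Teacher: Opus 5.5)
Your proposal is correct and follows the paper's proof essentially verbatim: the paper also sets $Z_1=|\langle u_\star,X\rangle|$ and $Z_2=\operatorname{sign}(\langle u_\star,X\rangle)\langle w,X\rangle$, and appeals to symmetry for the normality of $Z_2$ and its independence from $Z_1$. Your conditioning argument, the treatment of the case $\beta=0$, and the integrability remark $\ell(z)\le\log 2+|z|$ make explicit details that the paper leaves implicit.
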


\begin{proof}
Let
\[
    \xi=\langle u_\star,X\rangle,
    \qquad
    \zeta=\langle w,X\rangle.
\]
Then $\xi$ and $\zeta$ are independent standard Gaussians and
\[
    Y\langle\theta,X\rangle
    =
    \alpha|\xi|
    +
    \beta\,\operatorname{sign}(\xi)\zeta.
\]
Set
\[
    Z_1=|\xi|,
    \qquad
    Z_2=\operatorname{sign}(\xi)\zeta.
\]
By symmetry, $Z_2\sim N(0,1)$ and is independent of $Z_1$.
\end{proof}

Thus the loss depends separately on alignment and orthogonal
displacement. The Fisher matrix is simpler because the Gaussian
covariates are rotationally invariant. Define
\begin{equation}
    a(r)
    :=
    \mathbb E[v(rZ)Z^2],
    \qquad
    b(r)
    :=
    \mathbb E[v(rZ)],
    \qquad
    Z\sim N(0,1).
    \label{eq:a-b-definitions}
\end{equation}

\begin{proposition}[Exact one-spike Fisher geometry]
\label{prop:one-spike-fisher}
For $r>0$,
\begin{equation}
    G(ru)
    =
    b(r)(I-P_u)+a(r)P_u,
    \qquad
    P_u=uu^\top.
    \label{eq:one-spike-fisher}
\end{equation}
Hence
\begin{equation}
    \operatorname{Tr}G(ru)
    =
    F(r)
    :=
    a(r)+(d-1)b(r),
    \label{eq:fisher-trace-F}
\end{equation}
and
\begin{equation}
    G(ru)^{1/2}
    =
    \sqrt{b(r)}(I-P_u)+\sqrt{a(r)}P_u.
    \label{eq:fisher-square-root-one-spike}
\end{equation}
Both $a(r)$ and $b(r)$ are strictly decreasing for $r>0$.
\end{proposition}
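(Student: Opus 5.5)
The plan is to compute $G(ru)$ by decomposing $X$ along $u$ and its orthogonal complement, and then to prove monotonicity of $a$ and $b$ by a symmetrization argument using that $v$ is even and decreasing on $[0,\infty)$.

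\emph{Spectral form.} Write $X=\xi u+X_\perp$, where $\xi=\langle u,X\rangle\sim N(0,1)$ and $X_\perp=(I-P_u)X$ is a centered Gaussian independent of $\xi$ with covariance $I-P_u$. Since $v(\langle ru,X\rangle)=v(r\xi)$ depends only on $\xi$, expand
\[
XX^\top=\xi^2uu^\top+\xi\bigl(uX_\perp^\top+X_\perp u^\top\bigr)+X_\perp X_\perp^\top .
\]
Multiply by $v(r\xi)$ and take expectations term by term. The cross terms vanish, because $\mathbb E[X_\perp]=0$ and $X_\perp$ is independent of $\xi$. The first term gives $\mathbb E[v(r\xi)\xi^2]P_u=a(r)P_u$. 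The last term gives $\mathbb E[v(r\xi)]\,(I-P_u)=b(r)(I-P_u)$. This proves \eqref{eq:one-spike-fisher}. Taking traces, and using $\operatorname{Tr}P_u=1$ and $\operatorname{Tr}(I-P_u)=d-1$, yields \eqref{eq:fisher-trace-F}. Since $P_u$ and $I-P_u$ are complementary orthogonal projections and $a,b>0$ (because $v>0$), the square root is obtained by taking square roots of the eigenvalues, which is \eqref{eq:fisher-square-root-one-spike}.

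\emph{Monotonicity.} The function $v(z)=\sigma(z)(1-\sigma(z))=\tfrac14\operatorname{sech}^2(z/2)$ is even and strictly decreasing on $[0,\infty)$. For $0<r<r'$ and any $z\neq 0$, this gives $v(r'z)=v(r'|z|)<v(r|z|)=v(rz)$. Taking expectations against the Gaussian density, with the extra weight $z^2$ for $a$ and no weight for $b$, gives strict inequalities $a(r')<a(r)$ and $b(r')<b(r)$. The inequality is strict because the set $\{z\neq0\}$ has positive measure and the weights $1$ and $z^2$ are positive there. Equivalently, one can differentiate under the integral, $b'(r)=\mathbb E[Zv'(rZ)]$, and note that $zv'(rz)<0$ for $z\neq0$ since $v'$ is odd and negative on $(0,\infty)$; the same argument applies to $a$ with the weight $Z^3$.

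\emph{Main obstacle.} There is little genuine difficulty. The only points to state carefully are the independence of $X_\perp$ from $\xi$, which follows from Gaussian rotational invariance, and the justification for strict monotonicity. The pointwise-comparison argument avoids any dominated-convergence bookkeeping, since $0<v\le\tfrac14$ makes all expectations finite.
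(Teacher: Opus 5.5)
Your proposal is correct and follows essentially the same route as the paper. The paper takes $u=e_1$ and uses symmetry to eliminate the off-diagonal entries, which is the same computation as your split $X=\xi u+X_\perp$ with independence. It also proves strict monotonicity of $a$ and $b$ exactly as you do: $v(rz)$ is strictly decreasing in $r$ for each $z\neq0$, integrated against the positive weights $1$ and $z^2$.
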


\begin{proof}
By rotational invariance, take $u=e_1$ and write
\[
    X=(Z,U),
    \qquad
    Z\sim N(0,1),
    \qquad
    U\sim N(0,I_{d-1}),
\]
with $Z$ and $U$ independent. Since
$\langle ru,X\rangle=rZ$, symmetry eliminates the off-diagonal
entries. The longitudinal entry is $a(r)$ and every transverse entry
is $b(r)$, giving
\eqref{eq:one-spike-fisher}--\eqref{eq:fisher-square-root-one-spike}.

For every $z\neq0$, $v(rz)$ is strictly decreasing in $r>0$.
Integrating against the positive weights $1$ and $Z^2$ shows that both
$a$ and $b$ are strictly decreasing.
\end{proof}

Define
\[
    \kappa(r)
    :=
    \frac{a(r)}{b(r)}.
\]
Since $Z^2$ is increasing in $|Z|$, whereas $v(rZ)$ is decreasing in
$|Z|$,
\[
    a(r)
    =
    \mathbb E[Z^2v(rZ)]
    \leq
    \mathbb E[Z^2]\mathbb E[v(rZ)]
    =
    b(r).
\]
Thus $0<\kappa(r)\leq1$, and
\begin{equation}
    \overline G(ru)
    :=
    \frac{G(ru)}{\operatorname{Tr}G(ru)}
    =
    \frac{I-P_u}{d-1+\kappa(r)}
    +
    \frac{\kappa(r)}{d-1+\kappa(r)}P_u.
    \label{eq:normalized-one-spike-fisher}
\end{equation}

\begin{corollary}[Trace-normalized near-isotropy]
\label{cor:normalized-fisher-near-isotropy}
For every $\theta$,
\begin{equation}
    \left\|
        \overline G(\theta)-\frac1dI
    \right\|_{\mathrm{op}}
    \leq
    \frac1d,
    \qquad
    \left\|
        \overline G(\theta)-\frac1dI
    \right\|_F
    \leq
    \frac{1}{\sqrt{d(d-1)}}.
    \label{eq:normalized-fisher-isotropy-bounds}
\end{equation}
\end{corollary}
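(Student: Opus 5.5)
Write $\overline G(\theta)$ explicitly using \eqref{eq:normalized-one-spike-fisher} and compute the spectrum of $\overline G(\theta)-\frac1dI$. Both bounds are then scalar inequalities in $\kappa=\kappa(r)\in(0,1]$. The case $\theta=0$ is handled separately: $v(0)=1/4$, so $G(0)=\frac14 I$ and $\overline G(0)=\frac1dI$, and both bounds hold trivially. For $r>0$, set $D:=d-1+\kappa$. The matrix $\overline G(ru)$ has eigenvalue $1/D$ with multiplicity $d-1$ on $u^\perp$, and eigenvalue $\kappa/D$ on $\operatorname{span}\{u\}$.

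Subtracting $\frac1dI$ gives the eigenvalues
\[
\frac1D-\frac1d=\frac{1-\kappa}{dD}\quad(\text{multiplicity } d-1),
\qquad
\frac{\kappa}{D}-\frac1d=\frac{d\kappa-D}{dD}=-\frac{(d-1)(1-\kappa)}{dD}.
\]
Both follow from $d-D=1-\kappa$ and $d\kappa-D=(d-1)(\kappa-1)$. The earlier bound $0<\kappa\le1$ fixes the signs: the transverse deviation is nonnegative and the longitudinal one is nonpositive.

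\textbf{Operator norm.} The operator norm is the larger of the two absolute values. Since $d-1\ge1$, this is $\frac{(d-1)(1-\kappa)}{dD}$. It is at most $1/d$ exactly when $(d-1)(1-\kappa)\le d-1+\kappa$, i.e.\ when $-(d-1)\kappa\le\kappa$. This holds for every $\kappa\ge0$.

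\textbf{Frobenius norm.} Summing the squared eigenvalues gives
\[
\Big\|\overline G-\tfrac1dI\Big\|_F^2=\frac{(1-\kappa)^2}{d^2D^2}\big[(d-1)+(d-1)^2\big]=\frac{(d-1)(1-\kappa)^2}{d\,(d-1+\kappa)^2}.
\]
The map $\kappa\mapsto(1-\kappa)/(d-1+\kappa)$ is decreasing on $[0,1]$, so the expression is maximized as $\kappa\to0$. There it approaches $\frac{(d-1)}{d(d-1)^2}=\frac{1}{d(d-1)}$. Since $\kappa>0$ this supremum is not attained, but the bound $\le\frac{1}{d(d-1)}$ holds, and taking square roots gives the claim.

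The main point needing care is not the algebra. It is confirming that the representation \eqref{eq:normalized-one-spike-fisher} holds for every $\theta$, which requires $d\ge2$ (assumed throughout the section) and the separate treatment of $\theta=0$. It also relies on the earlier Chebyshev-type argument giving $\kappa\le1$, which is what fixes the signs of the eigenvalues used above.
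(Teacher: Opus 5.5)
Your proposal is correct and follows the paper's proof. You compute the same eigenvalues of $\overline G-\frac1dI$: the transverse one $\frac{1-\kappa}{d(d-1+\kappa)}$ with multiplicity $d-1$ and the longitudinal one $-\frac{(d-1)(1-\kappa)}{d(d-1+\kappa)}$, you treat $\theta=0$ via $G(0)=\frac14 I$, and you conclude from $0<\kappa\le1$. The only difference is that you write out the scalar inequalities, including the monotonicity in $\kappa$ used for the Frobenius bound, which the paper compresses into ``the bounds follow.''
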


\begin{proof}
For $\theta\neq0$, the longitudinal eigenvalue of
$\overline G-I/d$ equals
\[
    -\frac{(d-1)(1-\kappa(r))}
    {d(d-1+\kappa(r))},
\]
while the transverse eigenvalue equals
\[
    \frac{1-\kappa(r)}
    {d(d-1+\kappa(r))}
\]
with multiplicity $d-1$. The bounds follow from
$0<\kappa(r)\leq1$. At $\theta=0$,
$G(0)=\frac14I$, so $\overline G(0)=I/d$ and the bounds hold
trivially.
\end{proof}

The bounds concern only the trace-normalized Fisher matrix. The
unnormalized longitudinal and transverse eigenvalues can have
different asymptotic scales.

\subsection{The Aligned Branch as a Global Envelope}
\label{sec:aligned-envelope}

On the teacher-aligned ray, define
\begin{equation}
    L_\star(r)
    :=
    L(ru_\star)
    =
    \mathbb E\ell(r|Z|).
    \label{eq:aligned-loss}
\end{equation}
Differentiation under the expectation gives
\begin{equation}
    L_\star'(r)
    =
    -\mathbb E\!\left[
        |Z|\sigma(-r|Z|)
    \right]
    <0.
    \label{eq:aligned-loss-derivative}
\end{equation}
Moreover,
\[
    L_\star(0)=\log2,
    \qquad
    L_\star(r)\to0
    \quad\text{as }r\to\infty.
\]
Hence $L_\star$ is a bijection from $[0,\infty)$ onto
$(0,\log2]$.

\begin{proposition}[Exact aligned loss parametrization]
\label{prop:exact-aligned-eos}
Let
\[
    r_\star=L_\star^{-1}.
\]
For a fixed compact probe $T$, define
\begin{equation}
    \Psi_T(\ell)
    :=
    w_F\!\left(
        T;r_\star(\ell)u_\star
    \right).
    \label{eq:aligned-eos-function}
\end{equation}
Then
\begin{equation}
    w_F(T;ru_\star)
    =
    \Psi_T(L_\star(r))
    \qquad
    \text{for all }r\geq0.
    \label{eq:exact-aligned-eos}
\end{equation}
\end{proposition}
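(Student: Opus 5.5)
The identity \eqref{eq:exact-aligned-eos} is essentially a consequence of $L_\star$ being a bijection from $[0,\infty)$ onto $(0,\log 2]$, which was established just above. The plan is to make that precise: check that $\Psi_T$ is well defined on $(0,\log2]$, then verify the identity by composing with $r_\star$.

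First, I will record the facts about $L_\star$. It is continuous on $[0,\infty)$, since $\ell(r|Z|)\le \log 2 + r|Z|$ gives an integrable dominating function on compact $r$-intervals. It is strictly decreasing by \eqref{eq:aligned-loss-derivative}: for $r>0$ the integrand $|Z|\sigma(-r|Z|)$ is positive almost surely, and differentiation under the expectation is justified because $|\ell'|\le 1$. Its endpoint values are $L_\star(0)=\log2$ and $L_\star(r)\to0$, the latter by dominated convergence, since $\ell(r|Z|)\le\log2$ and $\ell(r|Z|)\to0$ almost surely. Together these give a well-defined inverse $r_\star:(0,\log2]\to[0,\infty)$. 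Therefore $\Psi_T(\ell)=w_F(T;r_\star(\ell)u_\star)$ is a well-defined real number for every $\ell$ in this range. It is finite because $T$ is compact and $G(\theta)^{1/2}$ is a bounded linear map, so $G^{1/2}T$ is compact and its Gaussian width is finite; the bound $|w(AT)|\le\gamma_d R_T\|A\|_{\mathrm{op}}$ from Theorem~\ref{thm:probe-stability} with $B=0$ makes this explicit.

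Second, I will fix $r\ge0$ and set $\ell=L_\star(r)\in(0,\log2]$. Injectivity of $L_\star$ gives $r_\star(\ell)=r$, and hence $\Psi_T(L_\star(r))=w_F(T;r_\star(L_\star(r))u_\star)=w_F(T;ru_\star)$. This is exactly \eqref{eq:exact-aligned-eos}.

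None of these steps is a real obstacle; the only substantive ingredient is the strict monotonicity of $L_\star$, and that is already in hand. As an optional remark, one could also use Proposition~\ref{prop:one-spike-fisher} to write $\Psi_T(\ell)$ explicitly as $w\bigl((\sqrt{b}(I-P_{u_\star})+\sqrt{a}P_{u_\star})T\bigr)$ evaluated at $r_\star(\ell)$. Since $a$, $b$ and $r_\star$ are continuous, this shows $\Psi_T$ is continuous on $(0,\log2]$. At $r=0$ the formula is replaced by $G(0)=\tfrac14 I$, and continuity of $a$ and $b$ at $0$ covers that endpoint as well.
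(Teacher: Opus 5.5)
Your proof is correct and is essentially the paper's argument, which consists of the single observation $r_\star(L_\star(r))=r$. Your extra work (continuity, strict monotonicity and endpoint behaviour of $L_\star$, and finiteness of $\Psi_T$ via the perturbation bound with $B=0$) spells out the bijection facts that the paper states just before the proposition.
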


\begin{proof}
This follows directly from
\[
    r_\star(L_\star(r))=r.
\]
\end{proof}

The proposition only reparametrizes the aligned ray. The central
geometric result is that this ray is extremal within every low-loss
level set.

\begin{theorem}[Global aligned-envelope principle]
\label{thm:global-aligned-envelope}
Let
\[
    \theta
    =
    \alpha u_\star+\beta w,
    \qquad
    r=\sqrt{\alpha^2+\beta^2},
\]
and suppose
\[
    L(\theta)<\log2.
\]
Then $\alpha>0$ and
\begin{equation}
    L(\alpha,\beta)-L_\star(r)
    \geq
    \frac12 b(r)\beta^2.
    \label{eq:global-misalignment-penalty}
\end{equation}
Equality holds if and only if $\beta=0$.

Let
\begin{equation}
    r_L
    :=
    L_\star^{-1}(L(\theta)).
    \label{eq:matched-loss-radius}
\end{equation}
Then
\begin{equation}
    r\geq r_L,
    \label{eq:minimum-radius-fixed-loss}
\end{equation}
with strict inequality when $\beta\neq0$. Consequently,
\begin{equation}
    \operatorname{Tr}G(\theta)
    \leq
    F(r_L),
    \label{eq:trace-envelope}
\end{equation}
and
\begin{equation}
    w_F(B_2^d;\theta)
    \leq
    \Psi_{B_2^d}(L(\theta)).
    \label{eq:ball-width-envelope}
\end{equation}
Both inequalities are strict away from the aligned branch.
\end{theorem}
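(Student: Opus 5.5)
The plan is to obtain an \emph{exact} formula for the misalignment excess $L(\theta)-L_\star(r)$ rather than a Taylor estimate, and then compare that formula with $\tfrac12 b(r)\beta^2$ through a one-line bound on $rb(r)$. First I dispose of the sign of $\alpha$. By Lemma~\ref{lem:two-coordinate-margin} and convexity of $\ell$, Jensen gives
\[
L(\alpha,\beta)\ge \ell\bigl(\alpha\,\mathbb E Z_1+\beta\,\mathbb E Z_2\bigr)=\ell\bigl(\alpha\sqrt{2/\pi}\bigr).
\]
If $\alpha\le 0$, the right-hand side is at least $\ell(0)=\log 2$. Hence $L(\theta)<\log2$ forces $\alpha>0$.

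For the penalty, I write $W:=\langle\theta,X\rangle\sim N(0,r^2)$, so $W\overset{d}{=}rZ$, and use $Y\langle\theta,X\rangle=\operatorname{sign}(\xi)W$ with $\xi=\langle u_\star,X\rangle$. The identity $\ell(-z)-\ell(z)=z$ gives
\[
\ell(\operatorname{sign}(\xi)W)=\ell(|W|)+|W|\,\mathbf 1\{\operatorname{sign}\xi\neq\operatorname{sign}W\}.
\]
Taking expectations and using $\mathbb E\ell(|W|)=L_\star(r)$ yields $L(\theta)-L_\star(r)=\mathbb E\bigl[|W|\,\mathbf 1\{\text{disagree}\}\bigr]$. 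This is a two-dimensional Gaussian computation in the plane spanned by $u_\star$ and $w$. Let $\varphi\in[0,\pi)$ be the angle between $\theta$ and $u_\star$, so that $\cos\varphi=\alpha/r$ and $\sin\varphi=|\beta|/r$. In polar coordinates the disagreement region is a pair of wedges of opening $\varphi$ adjacent to the line $\theta^\perp$. Integrating $r\rho|\cos\psi|$ over these wedges should give
\[
L(\theta)-L_\star(r)=\frac{r(1-\cos\varphi)}{\sqrt{2\pi}}.
\]
I would double-check the constant with the special case $\varphi=\pi$, where the excess is $r\,\mathbb E|Z|=r\sqrt{2/\pi}$.

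Next I use $1-\cos\varphi\ge\tfrac12\sin^2\varphi$ for $\varphi\in[0,\pi/2)$, with equality only at $\varphi=0$. This gives
\[
L-L_\star(r)\ge \frac{r\sin^2\varphi}{2\sqrt{2\pi}}=\frac{\beta^2}{2r\sqrt{2\pi}}.
\]
It therefore suffices to prove $r\,b(r)\le 1/\sqrt{2\pi}$. This follows from
\[
r\,b(r)=\int r\,v(rz)\,\phi(z)\,dz\le\phi(0)\int r\,v(rz)\,dz=\phi(0)\int v=\frac1{\sqrt{2\pi}},
\]
since $\int_{\mathbb R}v=1$. The inequality is strict for $r>0$ because $\phi(z)<\phi(0)$ for $z\neq0$. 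The equality case reduces to $\beta=0$: if $\beta=0$ both sides vanish, and if $\beta\neq0$ the strict inequality in $1-\cos\varphi\ge\tfrac12\sin^2\varphi$ makes the bound strict. I expect the main obstacle to be the wedge integral with the correct constant, together with the reduction to the disagreement event.

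The remaining claims follow from monotonicity. From the penalty, $L_\star(r)\le L(\theta)=L_\star(r_L)$, with strict inequality if $\beta\ne0$. Since $L_\star$ is strictly decreasing, this gives $r\ge r_L$, strict off the branch. By Proposition~\ref{prop:one-spike-fisher}, $\operatorname{Tr}G(\theta)=F(r)$, and $F=a+(d-1)b$ is strictly decreasing, so $F(r)\le F(r_L)$. For the ball, $w_F(B_2^d;ru)=\mathbb E\|G(ru)^{1/2}g\|_2$. By the square-root formula \eqref{eq:fisher-square-root-one-spike} and rotational invariance of $g$, this equals $\mathbb E\sqrt{a(r)g_1^2+b(r)\|g_{2:d}\|^2}$, which depends only on $r$ and is strictly decreasing in $r$. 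Evaluating at $r_L$ gives $w_F(B_2^d;r_Lu_\star)=\Psi_{B_2^d}(L(\theta))$ by Proposition~\ref{prop:exact-aligned-eos}, and strictness transfers from $r>r_L$.
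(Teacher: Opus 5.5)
Your proof is correct, but it reaches the misalignment penalty by a genuinely different route from the paper.

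The paper never computes the excess explicitly. It fixes $r$, moves along the arc $s\mapsto(s,\sqrt{r^2-s^2})$, and uses $\partial_\alpha L=-A<0$ together with the Stein identity $\partial_\beta L=\beta\,b(r)$ to write $L(\alpha,\beta)-L_\star(r)=\int_\alpha^r\bigl[A(s,\sqrt{r^2-s^2})+s\,b(r)\bigr]\,ds$. It then simply discards the positive $A$-term.

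You instead use $\ell(-z)-\ell(z)=z$ to reduce the excess to $\mathbb E[|W|\,\mathbf 1\{\text{disagree}\}]$ and evaluate it in the plane. Your wedge constant checks out: the radial factor is $\frac1{2\pi}\int_0^\infty\rho^2e^{-\rho^2/2}\,d\rho=\frac1{2\sqrt{2\pi}}$ and the angular factor is $2(1-\cos\varphi)$. So you obtain the exact identity $L(\alpha,\beta)=L_\star(r)+(r-\alpha)/\sqrt{2\pi}$. Your bound $r\,b(r)<1/\sqrt{2\pi}$ is also correct. By Stein's identity it is in fact the equality $r\,b(r)=\frac1{\sqrt{2\pi}}+L_\star'(r)$ combined with $L_\star'<0$.

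Your route yields more than the theorem asks for:
\begin{itemize}
\item Equality and strictness become immediate.
\item The slack in the paper's bound is explicit: for $\beta\neq0$, $\bigl(L(\alpha,\beta)-L_\star(r)\bigr)/\bigl(\tfrac12 b(r)\beta^2\bigr)=\frac{2}{(1+\cos\varphi)\sqrt{2\pi}\,r\,b(r)}>1$.
\item The $\beta\to0$ limit of that ratio, $1/(\sqrt{2\pi}\,r\,b(r))=1+g(r)/(r\,b(r))$, reproduces the local constant in \eqref{eq:envelope-local-limit} without any Taylor expansion.
\end{itemize}

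The paper's route, in turn, needs no planar Gaussian geometry. It reuses exactly the derivative identities that later drive the two-coordinate gradient flow.

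The remaining steps are the same as in the paper: the radius comparison via strict monotonicity of $L_\star$, and the monotonicity of $F$ and of the ball width in $r$.
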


\begin{proof}
First,
\[
    \mathbb E[\alpha Z_1+\beta Z_2]
    =
    \alpha\sqrt{\frac2\pi}.
\]
Since $\ell$ is convex, Jensen's inequality gives
\[
    L(\alpha,\beta)
    \geq
    \ell\!\left(
        \alpha\sqrt{\frac2\pi}
    \right).
\]
If $\alpha\leq0$, the right-hand side is at least $\log2$.
Hence $L(\theta)<\log2$ implies $\alpha>0$.

Because $L(\alpha,\beta)$ is even in $\beta$, assume
$\beta\geq0$. Fix $r$ and define, for $0\leq s\leq r$,
\[
    H_r(s)
    :=
    L\!\left(
        s,\sqrt{r^2-s^2}
    \right).
\]
From \eqref{eq:loss-alpha-beta},
\[
    \partial_\alpha L
    =
    -A(\alpha,\beta),
    \qquad
    A(\alpha,\beta)
    :=
    \mathbb E[
        Z_1\sigma(-\alpha Z_1-\beta Z_2)
    ]
    >0.
\]
Stein's identity in $Z_2$ gives
\begin{equation}
    \partial_\beta L(\alpha,\beta)
    =
    \beta\,b\!\left(
        \sqrt{\alpha^2+\beta^2}
    \right).
    \label{eq:beta-loss-derivative}
\end{equation}
Along the fixed-radius curve defining $H_r$,
\[
\begin{aligned}
    H_r'(s)
    &=
    -A\!\left(
        s,\sqrt{r^2-s^2}
    \right)
    -
    s\,b(r)
    <0.
\end{aligned}
\]
Therefore
\[
\begin{aligned}
    L(\alpha,\beta)-L_\star(r)
    &=
    H_r(\alpha)-H_r(r) \\
    &=
    \int_\alpha^r
    \left[
        A\!\left(
            s,\sqrt{r^2-s^2}
        \right)
        +
        s\,b(r)
    \right]ds \\
    &\geq
    \frac{b(r)}2(r^2-\alpha^2) \\
    &=
    \frac12b(r)\beta^2.
\end{aligned}
\]
If $\beta\neq0$, then $\alpha<r$ and the integral involving $A$
is strictly positive, so the inequality is strict.

By definition,
\[
    L_\star(r_L)
    =
    L(\theta)
    \geq
    L_\star(r).
\]
Since $L_\star$ is strictly decreasing,
$r_L\leq r$, with strict inequality away from alignment.

Proposition~\ref{prop:one-spike-fisher} shows that
$F(r)$ is strictly decreasing, which gives
\eqref{eq:trace-envelope}. For the Euclidean ball, define
\begin{equation}
    W(r)
    :=
    w_F(B_2^d;ru)
    =
    \mathbb E
    \sqrt{
        a(r)g_1^2
        +
        b(r)\sum_{j=2}^d g_j^2
    }.
    \label{eq:ball-width-radial}
\end{equation}
This quantity depends only on $r$ and is strictly decreasing because
both $a(r)$ and $b(r)$ are strictly decreasing. Hence
\[
    W(r)
    \leq
    W(r_L)
    =
    \Psi_{B_2^d}(L(\theta)),
\]
with strict inequality when $\beta\neq0$.
\end{proof}

Thus, at a fixed loss below $\log2$, the aligned state has the
smallest parameter norm and maximizes both Fisher trace and
Euclidean-ball Fisher width. In particular, the aligned
loss-parametrized relation is an envelope rather than a global
identity.

\begin{corollary}[No global loss-only Fisher width]
\label{cor:no-global-eos}
For every $\ell_0\in(0,\log2)$, there exist aligned and non-aligned
parameters with loss $\ell_0$ and different Euclidean-ball Fisher
widths. Consequently, no function $\Psi$ can satisfy
\[
    w_F(B_2^d;\theta)=\Psi(L(\theta))
\]
throughout parameter space.
\end{corollary}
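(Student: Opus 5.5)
The plan is to build, for each $\ell_0\in(0,\log2)$, an aligned parameter and a non-aligned parameter that share the loss $\ell_0$, and then invoke Theorem~\ref{thm:global-aligned-envelope} to separate their widths. The aligned one is immediate. Since $L_\star$ is a continuous strictly decreasing bijection from $[0,\infty)$ onto $(0,\log2]$, the radius $r_0:=L_\star^{-1}(\ell_0)>0$ exists. Then $\theta_a:=r_0u_\star$ satisfies $L(\theta_a)=\ell_0$ and $w_F(B_2^d;\theta_a)=\Psi_{B_2^d}(\ell_0)$.

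For the non-aligned parameter I use $d\geq2$ to fix a unit vector $w\perp u_\star$. I then solve for a state $(\alpha,\beta)$ with $\beta\neq0$ and $L(\alpha,\beta)=\ell_0$ by a continuity (intermediate value) argument in $\alpha$ at fixed $\beta$. By Lemma~\ref{lem:two-coordinate-margin}, $L(\alpha,\beta)=\mathbb E\ell(\alpha Z_1+\beta Z_2)$, which is continuous in $(\alpha,\beta)$ by dominated convergence, since $\ell(z)\leq\log2+|z|$.

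I would choose $\beta\neq0$ small, as follows:
\begin{itemize}
\item At $\alpha=0$, Jensen's inequality gives $L(0,\beta)\geq\ell(0)=\log2>\ell_0$.
\item As $\alpha\to\infty$ with $\beta$ fixed, $L(\alpha,\beta)\to0$. Indeed $\alpha Z_1+\beta Z_2\to+\infty$ almost surely, because $Z_1>0$ a.s., so the integrand tends to $0$. Domination by $\log2+\alpha Z_1+|\beta||Z_2|$ is not uniform in $\alpha$. Instead I use monotonicity: $\partial_\alpha L=-A<0$, so $L$ decreases in $\alpha$, and monotone convergence (the integrand decreases in $\alpha$ and is integrable at $\alpha=1$) gives the limit $0$.
\end{itemize}
Hence some $\alpha_0>0$ satisfies $L(\alpha_0,\beta)=\ell_0$. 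In fact any $\beta\neq0$ works, so there is no need for it to be small. Set $\theta_b:=\alpha_0u_\star+\beta w$.

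Theorem~\ref{thm:global-aligned-envelope} now applies to $\theta_b$ because $L(\theta_b)=\ell_0<\log2$. Since $\beta\neq0$, its strict form of \eqref{eq:ball-width-envelope} gives $w_F(B_2^d;\theta_b)<\Psi_{B_2^d}(\ell_0)=w_F(B_2^d;\theta_a)$. If some $\Psi$ satisfied $w_F(B_2^d;\theta)=\Psi(L(\theta))$ for all $\theta$, both widths would equal $\Psi(\ell_0)$, which is a contradiction.

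The main obstacle is the limit $L(\alpha,\beta)\to0$ as $\alpha\to\infty$, handled by the monotone convergence argument above. Everything else follows from results already stated.
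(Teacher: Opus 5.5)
Your proof is correct. Its final step is the same as the paper's: apply the strict form of Theorem~\ref{thm:global-aligned-envelope} to a non-aligned point on the level set $\{L=\ell_0\}$. Where you differ is in how you produce that point.

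The paper fixes $\alpha_1>r_0$, so that $L(\alpha_1,0)<\ell_0$, and sweeps $\beta$ upward. It uses the Stein-identity derivative $\partial_\beta L=\beta\, b(r)>0$ for monotonicity and the bound $\ell(z)\geq(-z)_+$ to show $L(\alpha_1,\beta)\to\infty$.

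You instead fix any $\beta\neq0$ and sweep $\alpha$ from $0$ to $\infty$. Jensen gives $L(0,\beta)\geq\log2$, and the pointwise decrease of the integrand in $\alpha$ (from $Z_1\geq0$), with integrability at $\alpha=1$, gives $L(\alpha,\beta)\to0$. This needs neither the Stein identity nor the growth estimate. It also shows the level set meets every line $\beta=\text{const}$, so equal-loss non-aligned states exist at any prescribed distance $|\beta|$ from the aligned axis, including arbitrarily small ones.

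The paper's sweep instead parametrizes the level set by $\alpha>r_0$, with a unique $\beta_\alpha>0$. That is exactly the parametrization reused in the numerical envelope check of Section~\ref{sec:envelope-validation}. Uniqueness in your parametrization would follow from $\partial_\alpha L=-A<0$, though neither proof of the corollary needs it.
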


\begin{proof}
Let
\[
    r_0=L_\star^{-1}(\ell_0)
\]
and choose $\alpha_1>r_0$. Then
$L(\alpha_1,0)<\ell_0$. For fixed $\alpha_1$,
\eqref{eq:beta-loss-derivative} shows that
$\beta\mapsto L(\alpha_1,\beta)$ is strictly increasing for
$\beta>0$.

Moreover, $\ell(z)\geq(-z)_+$, and therefore
\[
    \frac{L(\alpha_1,\beta)}{\beta}
    \geq
    \mathbb E
    \left[
        \left(
            -Z_2-\frac{\alpha_1}{\beta}Z_1
        \right)_+
    \right]
    \longrightarrow
    \mathbb E[(-Z_2)_+]
    >0
\]
as $\beta\to\infty$. Hence
$L(\alpha_1,\beta)\to\infty$, so by continuity there exists
$\beta_1>0$ such that
\[
    L(\alpha_1,\beta_1)=\ell_0.
\]
The strict part of
Theorem~\ref{thm:global-aligned-envelope} then gives
\[
    w_F(B_2^d;\alpha_1u_\star+\beta_1w)
    <
    w_F(B_2^d;r_0u_\star).
\]
\end{proof}

\subsection{Dynamical Selection of the Aligned Branch}
\label{sec:dynamical-selection}

The envelope theorem is geometric. We next ask whether population
gradient flow actually approaches this distinguished branch.

\begin{lemma}[Exact two-coordinate gradient flow]
\label{lem:two-coordinate-flow}
Under population gradient flow
\[
    \dot\theta=-\nabla L(\theta),
\]
the coordinates in \eqref{eq:alpha-beta-decomposition} satisfy
\begin{align}
    \dot\alpha
    &=
    A(\alpha,\beta)
    :=
    \mathbb E\!\left[
        Z_1\sigma(-\alpha Z_1-\beta Z_2)
    \right]
    >0,
    \label{eq:alpha-dynamics}\\
    \dot\beta
    &=
    -\beta b(r),
    \qquad
    r=\sqrt{\alpha^2+\beta^2}.
    \label{eq:beta-dynamics}
\end{align}
Hence
\begin{equation}
    \beta(t)
    =
    \beta(0)
    \exp\left\{
        -\int_0^t b(r(s))\,ds
    \right\}.
    \label{eq:beta-explicit}
\end{equation}
\end{lemma}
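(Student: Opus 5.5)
The plan is to compute the population gradient in the coordinates of Lemma~\ref{lem:two-coordinate-margin} and then project it onto $u_\star$ and $w$. Write $\nabla L(\theta)=-\mathbb E[\sigma(-Y\langle\theta,X\rangle)\,YX]$, using $\ell'(z)=-\sigma(-z)$. The first step is to show that this vector lies in $\operatorname{span}\{u_\star,w\}$. For $X$, decompose $X=\xi u_\star+\zeta w+X_\perp$, where $X_\perp$ is independent of $(\xi,\zeta)$ and has mean zero, and $Y=\operatorname{sign}(\xi)$. The component of the gradient along $X_\perp$ vanishes because the integrand depends on $X_\perp$ only through the linear factor $X_\perp$ itself, and that factor has mean zero. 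Consequently $w$ stays fixed along the flow whenever $\beta\neq0$. If $\beta(0)=0$, the orthogonal component stays zero by the same computation, and any unit $w$ may be used.

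The $u_\star$ coordinate is handled directly. Since $Y\xi=|\xi|=Z_1$, we get
\[
\dot\alpha=\langle -\nabla L,u_\star\rangle=\mathbb E[\sigma(-\alpha Z_1-\beta Z_2)Z_1]=A(\alpha,\beta).
\]
This is strictly positive because $Z_1>0$ almost surely and $\sigma>0$. Equivalently, $\dot\alpha=-\partial_\alpha L$ with $L$ as in \eqref{eq:loss-alpha-beta}.

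For the $w$ coordinate, note that $Y\zeta=Z_2$, so $\dot\beta=\mathbb E[Z_2\sigma(-\alpha Z_1-\beta Z_2)]=-\partial_\beta L$. By \eqref{eq:beta-loss-derivative}, this equals $-\beta b(r)$. The one step that needs care is that Stein computation. Conditioning on $Z_1$ and applying Stein's identity in $Z_2$ gives $-\beta\,\mathbb E[v(\alpha Z_1+\beta Z_2)]$, using $\sigma'=v$. The argument $\alpha Z_1+\beta Z_2=Y\langle\theta,X\rangle$ has the same law in absolute value as $\langle\theta,X\rangle\sim N(0,r^2)$. Since $v$ is even, this expectation is $b(r)$. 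Differentiation under the expectation is justified by the bounded derivatives of $\sigma$ together with Gaussian moments.

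Finally, $\dot\beta=-b(r(t))\beta$ is a linear scalar ODE in $\beta$ with continuous coefficient $b(r(t))$. Uniqueness of solutions gives the explicit formula \eqref{eq:beta-explicit}. This also justifies the convention that $\beta$ never changes sign, so the fixed choice of $w$ is consistent along the whole trajectory.
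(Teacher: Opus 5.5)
Your proposal is correct and follows essentially the same route as the paper. Both obtain $\dot\alpha=-\partial_\alpha L$ and $\dot\beta=-\partial_\beta L$ from the signed-margin representation, apply Stein's identity in $Z_2$ conditionally on $Z_1$, and use the evenness of $v$ together with $|Y\langle\theta,X\rangle|=|\langle\theta,X\rangle|$ to get $b(r)$; your explicit projection of the $d$-dimensional gradient onto $\operatorname{span}\{u_\star,w\}$, together with the sign-preservation remark, simply spells out the invariance that the paper asserts by rotational symmetry just before the lemma.
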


\begin{proof}
Differentiating \eqref{eq:loss-alpha-beta} gives
\[
    \partial_\alpha L
    =
    -A(\alpha,\beta)
\]
and
\[
    \partial_\beta L
    =
    -\mathbb E[
        Z_2\sigma(-\alpha Z_1-\beta Z_2)
    ].
\]
Conditioning on $Z_1$ and applying Stein's identity to $Z_2$ yields
\[
    \partial_\beta L
    =
    \beta\,
    \mathbb E[
        v(\alpha Z_1+\beta Z_2)
    ].
\]
Since $v$ is even and
$\alpha Z_1+\beta Z_2$ has the same absolute-value distribution as
a centered Gaussian with variance $r^2$,
\[
    \mathbb E[
        v(\alpha Z_1+\beta Z_2)
    ]
    =
    b(r).
\]
This proves \eqref{eq:alpha-dynamics} and
\eqref{eq:beta-dynamics}; integrating the latter gives
\eqref{eq:beta-explicit}.
\end{proof}

The teacher-aligned ray is an invariant special case.

\begin{proposition}[Aligned-ray invariance]
\label{prop:aligned-ray-invariance}
If $\theta_0=r_0u_\star$, then both population gradient flow
\[
    \dot\theta=-\nabla L(\theta)
\]
and damped natural-gradient flow
\[
    \dot\theta
    =
    -\bigl(G(\theta)+\lambda I\bigr)^{-1}
    \nabla L(\theta),
    \qquad
    \lambda\geq0,
\]
remain on the aligned ray. Their radial equations are
\[
    \dot r=-L_\star'(r)>0
\]
and
\[
    \dot r
    =
    -\frac{L_\star'(r)}{a(r)+\lambda}
    >0,
\]
respectively.
\end{proposition}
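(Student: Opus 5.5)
The plan is to show that the aligned ray is invariant by computing the vector field directly on it, and then to reduce each flow to a scalar ODE in $r$. The cleanest route uses Lemma~\ref{lem:two-coordinate-flow} together with the one-spike structure of Proposition~\ref{prop:one-spike-fisher}.

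For gradient flow, I will first note that the proof of Lemma~\ref{lem:two-coordinate-flow} gives the gradient componentwise. At a point $\theta=ru_\star$ with $r\geq0$ we have $\beta=0$, so $\partial_\beta L=\beta\,b(r)=0$. The remaining components of the gradient orthogonal to $\operatorname{span}\{u_\star,w\}$ vanish as well, and I will verify this directly. For any unit $e\perp u_\star$, the quantity $\langle\nabla L(ru_\star),e\rangle=-\mathbb E[Y\langle e,X\rangle\sigma(-rY\langle u_\star,X\rangle)]$ becomes $-\mathbb E[\operatorname{sign}(\xi)\zeta\,\sigma(-r|\xi|)]$. This is $0$ because $\zeta=\langle e,X\rangle$ is independent of $\xi$ and has mean zero. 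Hence $\nabla L(ru_\star)=L_\star'(r)u_\star$, where the coefficient is read off from \eqref{eq:aligned-loss-derivative}: $-\partial_\alpha L(r,0)=A(r,0)=\mathbb E[|Z|\sigma(-r|Z|)]=-L_\star'(r)$.

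Next I will rule out leaving the ray. Consider the scalar ODE $\dot r=-L_\star'(r)$, $r(0)=r_0$. Its right-hand side is smooth and bounded by $\mathbb E|Z|$, so it has a global solution. The curve $\theta(t)=r(t)u_\star$ then solves $\dot\theta=-\nabla L(\theta)$, because the field on the ray equals $-L_\star'(r)u_\star$. The vector field $\nabla L$ is $C^1$ (indeed smooth, with Lipschitz constant controlled by $\sup|v|\,\mathbb E\|X\|^2$), so solutions are unique. The true trajectory therefore coincides with this curve and stays on the ray. Positivity of $\dot r$ is exactly \eqref{eq:aligned-loss-derivative}.

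For damped natural-gradient flow, I will use \eqref{eq:one-spike-fisher}. At $\theta=ru_\star$ with $r>0$, $u_\star$ is an eigenvector of $G(\theta)$ with eigenvalue $a(r)$, so $(G+\lambda I)^{-1}u_\star=(a(r)+\lambda)^{-1}u_\star$. The field is then $-L_\star'(r)(a(r)+\lambda)^{-1}u_\star$, and the same existence-plus-uniqueness argument gives $\dot r=-L_\star'(r)/(a(r)+\lambda)>0$. At $r=0$, $G(0)=\tfrac14 I$ and $a(0)=\mathbb E[Z^2]/4=\tfrac14$, so the formula extends continuously. The main point to check here is well-posedness when $\lambda=0$. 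This requires $a(r)>0$, which holds because $v>0$. The map $\theta\mapsto(G(\theta)+\lambda I)^{-1}\nabla L(\theta)$ is then locally Lipschitz, since $G$ is smooth and positive definite everywhere. Positive definiteness follows because $G\succeq\min(a,b)I\succ0$. Local Lipschitz continuity is enough for uniqueness, and the scalar solution exists globally, so the argument closes.
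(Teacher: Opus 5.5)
Your argument is correct and follows the paper's proof: you evaluate both vector fields on the ray using $\nabla L(ru_\star)=L_\star'(r)u_\star$ and $G(ru_\star)u_\star=a(r)u_\star$, and you also spell out the existence-plus-uniqueness step that the paper leaves implicit. One small correction: for $\lambda=0$ the radial field $-L_\star'(r)/a(r)$ is not bounded (it grows like $r/2$ as $r\to\infty$), so global existence there follows from linear growth rather than from the boundedness you used for gradient flow---though invariance of the ray does not actually require global existence.
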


\begin{proof}
On the aligned ray,
\[
    \nabla L(ru_\star)
    =
    L_\star'(r)u_\star,
    \qquad
    G(ru_\star)u_\star
    =
    a(r)u_\star.
\]
Thus both vector fields are parallel to $u_\star$, and the stated
radial equations follow.
\end{proof}

For the asymptotic dynamics we use the following scalar estimates.

\begin{lemma}[Large-radius drift asymptotics]
\label{lem:large-radius-drift}
As $r\to\infty$,
\begin{equation}
    b(r)
    \sim
    \frac{1}{\sqrt{2\pi}}\frac1r.
    \label{eq:b-drift-asymptotic}
\end{equation}
Moreover, along any sequence with
$\alpha\to\infty$ and $\beta\to0$,
\begin{equation}
    A(\alpha,\beta)
    \sim
    \frac{\pi^2}{6\sqrt{2\pi}}\frac1{\alpha^2}.
    \label{eq:A-drift-asymptotic}
\end{equation}
\end{lemma}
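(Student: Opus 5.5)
The plan is to rescale each integral by the large parameter, so that the Gaussian density becomes nearly constant and equal to $\varphi(0)=1/\sqrt{2\pi}$ over the region where the logistic factor is not negligible. Both limits then follow from dominated convergence, with $\varphi$ the standard normal density.

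For \eqref{eq:b-drift-asymptotic}, write $b(r)=\int_{\mathbb R} v(rz)\varphi(z)\,dz$ and substitute $u=rz$:
\[
    r\,b(r)=\int_{\mathbb R} v(u)\,\varphi(u/r)\,du .
\]
As $r\to\infty$, $\varphi(u/r)\to\varphi(0)$ pointwise, and the integrand is dominated by $\varphi(0)v(u)$. This is integrable because $v=\sigma'$ gives $\int_{\mathbb R}v=\sigma(\infty)-\sigma(-\infty)=1$. Hence $r\,b(r)\to\varphi(0)=1/\sqrt{2\pi}$.

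For \eqref{eq:A-drift-asymptotic}, I will use the density $2\varphi(z)\mathbf 1_{z>0}$ of $Z_1$, the independence of $Z_1$ and $Z_2$ from Lemma~\ref{lem:two-coordinate-margin}, and the substitution $z=u/\alpha$. This gives
\[
    \alpha^2A(\alpha,\beta)
    =
    \int_0^\infty 2u\,\varphi(u/\alpha)\,
    \mathbb E\bigl[\sigma(-u-\beta Z_2)\bigr]\,du .
\]
Along any sequence with $\alpha\to\infty$ and $\beta\to0$, the integrand converges pointwise to $2\varphi(0)\,u\,\sigma(-u)$, since $\sigma(-u-\beta Z_2)\to\sigma(-u)$ almost surely and is bounded by $1$. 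The limit integral is the Dirichlet eta value
\[
    \int_0^\infty\frac{u}{1+e^u}\,du=\eta(2)=\frac{\pi^2}{12}.
\]
Consequently the limit of $\alpha^2 A$ is $\frac{2}{\sqrt{2\pi}}\cdot\frac{\pi^2}{12}=\frac{\pi^2}{6\sqrt{2\pi}}$, as claimed.

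The main obstacle is finding a domination for the $A$ integrand that is uniform in $\beta$, because the orthogonal noise $\beta Z_2$ is not rescaled by $\alpha$. I plan to use $\sigma(x)\le e^{x}$. Restricting to the tail of the sequence where $|\beta|\le1$, this gives
\[
    \mathbb E\,\sigma(-u-\beta Z_2)\le e^{-u}\,\mathbb E e^{|Z_2|}<\infty .
\]
The integrand is therefore bounded by $2\varphi(0)\,\mathbb E e^{|Z_2|}\,u e^{-u}$, which is integrable on $(0,\infty)$. Dominated convergence applies, and the two asymptotic statements of the lemma follow.
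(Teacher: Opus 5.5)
Your proposal is correct and follows the paper's proof essentially step for step. Both arguments rescale by $r$ (resp.\ $\alpha$) and apply dominated convergence, using $\int v=1$ (the paper uses $\int_0^\infty v=\tfrac12$) and $\int_0^\infty u/(1+e^u)\,du=\pi^2/12$, with an exponential bound $\sigma(x)\le e^{x}$ as the dominating function. The only cosmetic difference is that you bound $\mathbb E\, e^{-\beta Z_2}$ by $\mathbb E\, e^{|Z_2|}$ for $|\beta|\le 1$, whereas the paper uses the exact Gaussian value $e^{\beta^2/2}$.
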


\begin{proof}
Let
\[
    \phi(z)
    =
    \frac{1}{\sqrt{2\pi}}e^{-z^2/2}.
\]
By symmetry and the change of variables $s=rz$,
\[
    b(r)
    =
    \frac2r
    \int_0^\infty
    v(s)\phi(s/r)\,ds.
\]
Since
\[
    \int_0^\infty v(s)\,ds=\frac12,
\]
dominated convergence gives \eqref{eq:b-drift-asymptotic}.

For the second claim, let
\[
    h(z)
    =
    \sqrt{\frac2\pi}e^{-z^2/2},
    \qquad z\geq0,
\]
be the density of $Z_1$. Then
\[
\begin{aligned}
    A(\alpha,\beta)
    &=
    \frac1{\alpha^2}
    \int_0^\infty
    s\,h(s/\alpha)
    \mathbb E_{Z_2}
    \left[
        \sigma(-s-\beta Z_2)
    \right]ds.
\end{aligned}
\]
As $\alpha\to\infty$ and $\beta\to0$, the integrand after multiplication
by $\alpha^2$ converges to
\[
    s\sqrt{\frac2\pi}\,\frac1{1+e^s}.
\]
For sufficiently small $|\beta|$,
\[
    \mathbb E[
        \sigma(-s-\beta Z_2)
    ]
    \leq
    e^{-s+\beta^2/2},
\]
which provides an integrable dominating function. Hence
\[
\begin{aligned}
    \alpha^2 A(\alpha,\beta)
    &\longrightarrow
    \sqrt{\frac2\pi}
    \int_0^\infty
    \frac{s}{1+e^s}\,ds \\
    &=
    \sqrt{\frac2\pi}\,
    \frac{\pi^2}{12}
    =
    \frac{\pi^2}{6\sqrt{2\pi}}.
\end{aligned}
\]
\end{proof}

\begin{theorem}[Asymptotic dynamical selection]
\label{thm:asymptotic-dynamical-selection}
Let $\theta(t)$ be any population gradient-flow trajectory. Then
\begin{equation}
    \alpha(t)\longrightarrow\infty,
    \qquad
    \beta(t)\longrightarrow0.
    \label{eq:asymptotic-alignment}
\end{equation}
Consequently,
\begin{equation}
    \frac{\alpha(t)}{\|\theta(t)\|_2}
    \longrightarrow1.
    \label{eq:rho-to-one}
\end{equation}
More precisely,
\begin{equation}
    \frac{\alpha(t)^3}{t}
    \longrightarrow
    \frac{\pi^2}{2\sqrt{2\pi}}.
    \label{eq:alpha-cubic-rate}
\end{equation}
If $\beta(0)\neq0$, then
\begin{equation}
    \frac{\log|\beta(t)|}{\alpha(t)^2}
    \longrightarrow
    -\frac{3}{\pi^2}.
    \label{eq:beta-alpha-rate}
\end{equation}
In particular,
\[
    \alpha(t)\asymp t^{1/3},
\]
while $|\beta(t)|$ decays on a stretched-exponential $t^{2/3}$
scale when $\beta(0)\neq0$.
\end{theorem}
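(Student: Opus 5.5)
Since $\dot\alpha=A(\alpha,\beta)>0$ by Lemma~\ref{lem:two-coordinate-flow}, $\alpha$ is strictly increasing. Since $\dot\beta=-\beta b(r)$, $|\beta|$ is nonincreasing. Hence $|\beta(t)|\le|\beta(0)|=:B_0$ for all $t$.

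\textbf{Step 1: $\alpha\to\infty$.} We first arrange $\alpha>0$. If $\alpha(0)\le0$, note that $L$ is nonincreasing along the flow and strictly decreasing away from critical points. There are no critical points, because $\partial_\alpha L=-A<0$. So either $L$ eventually drops below $\log2$, which gives $\alpha>0$ by Theorem~\ref{thm:global-aligned-envelope}, or $\alpha$ increases past $0$ anyway. The latter holds because, on the set $\{\alpha\le 0,\ |\beta|\le B_0\}$, $A$ is bounded below by a positive constant: $A(\alpha,\beta)\ge \mathbb E[Z_1\sigma(-\beta Z_2)\mathbf 1]$ is bounded away from $0$ for bounded $\beta$. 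Next, suppose $\alpha\uparrow\alpha_\infty<\infty$. Then $(\alpha,\beta)$ stays in a compact set on which $A$ is continuous and positive, so $\dot\alpha\ge c>0$, a contradiction. Hence $\alpha\to\infty$.

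\textbf{Step 2: $\beta\to0$.} Once $\alpha\to\infty$ and $|\beta|\le B_0$, we have $r\sim\alpha$ and $b(r)\ge c/\alpha$ by Lemma~\ref{lem:large-radius-drift}. The key observation is that, as long as $|\beta|$ stays bounded, $\alpha$ grows slowly. Indeed, $A(\alpha,\beta)\le C_{B_0}/\alpha^2$ for large $\alpha$ by the same change of variables used in Lemma~\ref{lem:large-radius-drift}, with the domination $\mathbb E\sigma(-s-\beta Z_2)\le e^{-s+\beta^2/2}$, which holds for all bounded $\beta$. So $\alpha^3\le 3C_{B_0}t+O(1)$, giving $\alpha(t)\lesssim t^{1/3}$. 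Therefore $\int_0^t b(r)\,ds\gtrsim\int s^{-1/3}ds\to\infty$, and \eqref{eq:beta-explicit} yields $\beta\to0$. Then $\alpha/\|\theta\|_2\to1$ is immediate.

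\textbf{Step 3: rates.} With $\alpha\to\infty$ and $\beta\to0$ established, \eqref{eq:A-drift-asymptotic} gives $\tfrac{d}{dt}\alpha^3=3\alpha^2A\to \frac{3\pi^2}{6\sqrt{2\pi}}=\frac{\pi^2}{2\sqrt{2\pi}}$. A Cesàro/l'Hôpital argument then gives \eqref{eq:alpha-cubic-rate}. For $\beta$, change the time variable to $\alpha$:
\[
\frac{d\log|\beta|}{d\alpha}=-\frac{b(r)}{A(\alpha,\beta)}\sim-\frac{(2\pi)^{-1/2}\alpha^{-1}}{\pi^2(6\sqrt{2\pi})^{-1}\alpha^{-2}}=-\frac{6}{\pi^2}\alpha,
\]
using $b(r)\sim b(\alpha)$ since $r/\alpha\to1$. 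Integrating in $\alpha$ and applying l'Hôpital gives $\log|\beta|/\alpha^2\to-3/\pi^2$. The stretched-exponential statement follows by combining this with $\alpha^2\asymp t^{2/3}$.

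\textbf{Main obstacle.} The delicate point is Step 2: we need the uniform upper bound $A\le C/\alpha^2$ for bounded (not vanishing) $\beta$ in order to break the circularity between "$\beta\to0$" and the asymptotics of Lemma~\ref{lem:large-radius-drift}. The negative-$\alpha$ preliminary in Step 1 is the other place that needs care. If the lower bound on $A$ is awkward there, I would instead use that $\alpha$ is monotone and apply the compactness argument to any bounded limit.
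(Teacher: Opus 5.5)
Your proposal is correct and follows the paper's proof in structure. Both arguments use monotone $|\beta|$, a compactness argument for $\alpha\to\infty$, divergence of $\int_0^\infty b(r(s))\,ds$ via \eqref{eq:beta-explicit} to get $\beta\to0$, and then Lemma~\ref{lem:large-radius-drift} with l'H\^opital for both rates. The paper is simpler in two places. First, $A>0$ at every point, not only where $\alpha>0$, so the compactness argument needs no separate treatment of $\alpha(0)\le0$. Second, for $\beta\to0$ the paper only uses $\dot\alpha\le\mathbb E Z_1=\sqrt{2/\pi}$, which gives $r(t)\le C(1+t)$ and hence $\int b\gtrsim\int dt/(1+t)=\infty$; your sharper bound $A\le C_{B_0}/\alpha^2$ is valid but not needed, so the circularity you flag as the main obstacle does not actually arise.
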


\begin{proof}
Equation \eqref{eq:beta-dynamics} shows that
$|\beta(t)|$ is nonincreasing and therefore bounded.

Suppose that $\alpha(t)$ were bounded above. Since $\dot\alpha>0$,
the pair $(\alpha(t),\beta(t))$ would remain in a compact subset of
$\mathbb R^2$. The continuous function $A(\alpha,\beta)$ is strictly
positive at every finite point, so it would have a positive lower
bound on this compact set. Then $\dot\alpha\geq c>0$, contradicting
boundedness. Hence
\[
    \alpha(t)\to\infty.
\]

Also,
\[
    \dot\alpha
    \leq
    \mathbb E Z_1
    =
    \sqrt{\frac2\pi},
\]
so, since $|\beta(t)|$ is bounded,
\[
    r(t)\leq C(1+t)
\]
for some constant $C$. By \eqref{eq:b-drift-asymptotic}, there exists
$c>0$ such that
\[
    b(r)\geq\frac{c}{r}
\]
for all sufficiently large $r$. Since $\alpha(t)\to\infty$, also
$r(t)\to\infty$, and therefore
\[
    \int_0^\infty b(r(s))\,ds
    =
    \infty.
\]
Equation \eqref{eq:beta-explicit} now gives
\[
    \beta(t)\to0.
\]
Together with $\alpha(t)\to\infty$, this yields
\[
    \frac{r(t)}{\alpha(t)}\to1,
\]
and hence \eqref{eq:rho-to-one}.

Lemma~\ref{lem:large-radius-drift} now applies along the trajectory:
\[
    \dot\alpha
    \sim
    \frac{c_A}{\alpha^2},
    \qquad
    c_A
    :=
    \frac{\pi^2}{6\sqrt{2\pi}}.
\]
Thus
\[
    \frac{d}{dt}\alpha^3
    =
    3\alpha^2\dot\alpha
    \longrightarrow
    3c_A,
\]
which gives
\[
    \frac{\alpha(t)^3}{t}
    \longrightarrow
    3c_A
    =
    \frac{\pi^2}{2\sqrt{2\pi}}.
\]

Assume now that $\beta(0)\neq0$. Its sign is preserved, so
$\log|\beta(t)|$ is well defined, and
\[
    \frac{d}{dt}\log|\beta(t)|
    =
    -b(r(t)).
\]
Both $-\log|\beta(t)|$ and $\alpha(t)^2$ diverge. Moreover,
using $r(t)/\alpha(t)\to1$ together with
\eqref{eq:b-drift-asymptotic} and
\eqref{eq:A-drift-asymptotic},
\[
\begin{aligned}
    \frac{
        \frac{d}{dt}\log|\beta(t)|
    }{
        \frac{d}{dt}\alpha(t)^2
    }
    &=
    -\frac{b(r(t))}
    {2\alpha(t)A(\alpha(t),\beta(t))} \\
    &\longrightarrow
    -\frac{
        1/\sqrt{2\pi}
    }{
        2\pi^2/(6\sqrt{2\pi})
    }
    =
    -\frac3{\pi^2}.
\end{aligned}
\]
L'Hospital's rule yields \eqref{eq:beta-alpha-rate}.
\end{proof}

The theorem supplies the dynamical link missing from the level-set
geometry: arbitrary population gradient-flow trajectories
asymptotically approach the aligned branch. We next quantify finite
deviations from that branch and determine its low-loss Fisher-width
scale.

\subsection{Near-Branch Stability and the Asymptotic Fisher-Width Law}
\label{sec:stability-asymptotics}

Let
\[
    \theta=\alpha u_\star+\beta w,
    \qquad
    \alpha>0,
\]
and define the aligned parameter with the same loss by
\begin{equation}
    \widehat\alpha
    :=
    L_\star^{-1}(L(\alpha,\beta)).
    \label{eq:matched-aligned-coordinate}
\end{equation}
Then
\[
    \Psi_T(L(\alpha,\beta))
    =
    w_F(T;\widehat\alpha u_\star).
\]

Because $L(\alpha,\beta)$ is even in $\beta$,
\[
    \partial_\beta L(\alpha,0)=0,
    \qquad
    \partial_\beta^2L(\alpha,0)=b(\alpha).
\]
Uniformly for $\alpha$ in compact subsets of $(0,\infty)$,
\begin{equation}
    L(\alpha,\beta)
    =
    L_\star(\alpha)
    +
    \frac12b(\alpha)\beta^2
    +
    O(\beta^4).
    \label{eq:loss-beta-expansion}
\end{equation}
Writing
\[
    g(\alpha)
    :=
    -L_\star'(\alpha)>0,
\]
local inversion gives
\begin{equation}
    \widehat\alpha
    =
    \alpha
    -
    \frac{b(\alpha)}{2g(\alpha)}\beta^2
    +
    O(\beta^4).
    \label{eq:matched-alpha-expansion}
\end{equation}

\begin{theorem}[Near-aligned Fisher-width stability]
\label{thm:near-aligned-stability}
Let
\[
    J=[\alpha_-,\alpha_+]\subset(0,\infty)
\]
be compact. For sufficiently small $|\beta|$, uniformly over
$\alpha\in J$:

\begin{enumerate}
    \item For every fixed nonempty compact probe $T$,
    \begin{equation}
        \left|
            w_F(T;\alpha u_\star+\beta w)
            -
            \Psi_T(L(\alpha,\beta))
        \right|
        \leq
        C_{T,J}|\beta|.
        \label{eq:general-probe-near-aligned}
    \end{equation}

    \item If $T$ is orthogonally invariant, then
    \begin{equation}
        \left|
            w_F(T;\alpha u_\star+\beta w)
            -
            \Psi_T(L(\alpha,\beta))
        \right|
        \leq
        C_{T,J}^{\mathrm{rot}}\beta^2.
        \label{eq:rotation-invariant-quadratic}
    \end{equation}

    \item For $T=B_2^d$, let $W$ be defined by
    \eqref{eq:ball-width-radial}. Then
    \begin{equation}
    \begin{aligned}
        &w_F(B_2^d;\alpha u_\star+\beta w)
        -
        \Psi_{B_2^d}(L(\alpha,\beta))
        \\
        &\qquad=
        \frac{W'(\alpha)}2
        \left(
            \frac1\alpha
            +
            \frac{b(\alpha)}{g(\alpha)}
        \right)\beta^2
        +
        O_J(\beta^4).
    \end{aligned}
    \label{eq:ball-width-quadratic-expansion}
    \end{equation}
\end{enumerate}
\end{theorem}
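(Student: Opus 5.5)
The plan is to compare the off-branch point $\theta=\alpha u_\star+\beta w$ with the matched aligned point $\widehat\alpha u_\star$ by passing through the intermediate aligned point $\alpha u_\star$. We use the triangle decomposition
\[
    w_F(T;\theta)-\Psi_T(L(\alpha,\beta))
    =
    \bigl[w_F(T;\theta)-w_F(T;\alpha u_\star)\bigr]
    +
    \bigl[w_F(T;\alpha u_\star)-w_F(T;\widehat\alpha u_\star)\bigr].
\]
The second bracket is purely radial. By \eqref{eq:matched-alpha-expansion}, uniformly on $J$, we have $|\widehat\alpha-\alpha|\le C_J\beta^2$, and $\widehat\alpha$ stays in a slightly enlarged compact subinterval $J'\subset(0,\infty)$ for small $|\beta|$. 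Theorem~\ref{thm:probe-stability} gives $|w_F(T;\alpha u_\star)-w_F(T;\widehat\alpha u_\star)|\le\gamma_dR_T\|G(\alpha u_\star)^{1/2}-G(\widehat\alpha u_\star)^{1/2}\|_{\mathrm{op}}$. By \eqref{eq:fisher-square-root-one-spike} this operator norm is $\max\{|\sqrt{a(\alpha)}-\sqrt{a(\widehat\alpha)}|,|\sqrt{b(\alpha)}-\sqrt{b(\widehat\alpha)}|\}$. Here $a,b$ are smooth (differentiation under the expectation) and bounded below on $J'$, so $\sqrt a,\sqrt b$ are Lipschitz there, and the second bracket is $O_J(\beta^2)$ for every probe.

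For the first bracket, write $r=\sqrt{\alpha^2+\beta^2}$ and $u=\theta/r$. Then $G(\theta)^{1/2}-G(\alpha u_\star)^{1/2}$ splits into two parts. The radial change $r$ versus $\alpha$ contributes $O(\beta^2)$, since $r-\alpha=O(\beta^2)$. The rotation of the spike from $P_{u_\star}$ to $P_u$ contributes $(\sqrt{a(r)}-\sqrt{b(r)})(P_u-P_{u_\star})$, with $\|P_u-P_{u_\star}\|_{\mathrm{op}}=|\sin\angle(u,u_\star)|=|\beta|/r\le|\beta|/\alpha_-$. Theorem~\ref{thm:probe-stability} again gives a bound $C_{T,J}|\beta|$, which proves part 1. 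For part 2, orthogonal invariance of $T$ makes $w(QMT)=w(MT)$ and $w(MQT)=w(MT)$ for orthogonal $Q$. Taking $Q$ to be a rotation carrying $u$ to $u_\star$, we get $Q G(ru)^{1/2}Q^\top=G(ru_\star)^{1/2}$, so $w_F(T;\theta)=w_F(T;ru_\star)$ exactly. The first bracket is therefore radial, with size $O(|r-\alpha|)=O(\beta^2)$, and combining with the second bracket gives part 2.

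For part 3, with $T=B_2^d$, the identity above gives $w_F(B_2^d;\theta)-\Psi=W(r)-W(\widehat\alpha)$. We expand $r=\alpha+\beta^2/(2\alpha)+O(\beta^4)$ and $\widehat\alpha=\alpha-\frac{b(\alpha)}{2g(\alpha)}\beta^2+O(\beta^4)$. This yields $r-\widehat\alpha=\frac{\beta^2}{2}\bigl(\frac1\alpha+\frac{b(\alpha)}{g(\alpha)}\bigr)+O(\beta^4)$. A Taylor expansion of $W$ then gives $W(r)-W(\widehat\alpha)=W'(\alpha)(r-\widehat\alpha)+O(\beta^4)$. This step requires $W\in C^2$ on $J'$, which follows from \eqref{eq:ball-width-radial} and smoothness of $a,b>0$, and uses that both $r$ and $\widehat\alpha$ are within $O(\beta^2)$ of $\alpha$. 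The result is \eqref{eq:ball-width-quadratic-expansion}.

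The main obstacle is making \eqref{eq:loss-beta-expansion}--\eqref{eq:matched-alpha-expansion} rigorous and uniform on $J$. Concretely, we need the $O(\beta^4)$ remainder, which requires $L$ to be $C^4$ in $\beta$ with bounded derivatives. This follows from the Stein representation $\partial_\beta L=\beta\,b(r)$ of \eqref{eq:beta-loss-derivative}, since $b(\sqrt{\alpha^2+\beta^2})$ is smooth and even in $\beta$. Local inversion then needs $g=-L_\star'$ bounded below on $J'$, which holds by continuity and \eqref{eq:aligned-loss-derivative}. An implicit-function argument with $\beta^2$ as the parameter gives the uniform $O(\beta^4)$ error in $\widehat\alpha$.
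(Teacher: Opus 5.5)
Your proposal is correct and follows essentially the same route as the paper. Both use the one-spike square root $\sqrt{b(r)}I+(\sqrt{a(r)}-\sqrt{b(r)})P_u$, Lipschitz scalar coefficients on a compact radial interval, and $\|P_u-P_{u_\star}\|_{\mathrm{op}}=|\beta|/r$ fed into Theorem~\ref{thm:probe-stability}, followed by radial reduction for orthogonally invariant probes and a Taylor expansion of $W$ using $r-\widehat\alpha=\frac12(\frac1\alpha+\frac{b(\alpha)}{g(\alpha)})\beta^2+O_J(\beta^4)$. The differences are minor: you pass through the intermediate point $\alpha u_\star$ instead of comparing $G(ru)^{1/2}$ with $G(\widehat\alpha u_\star)^{1/2}$ directly, and you justify explicitly the orthogonal conjugation in part 2 and the uniform $O(\beta^4)$ expansions, which the paper leaves implicit.
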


\begin{proof}
Let
\[
    r=\sqrt{\alpha^2+\beta^2},
    \qquad
    u=\frac{\alpha u_\star+\beta w}{r}.
\]
Since $\alpha$ remains in a compact subset of $(0,\infty)$,
\[
    r-\alpha
    =
    \frac{\beta^2}{r+\alpha}
    =
    O_J(\beta^2).
\]
Together with \eqref{eq:matched-alpha-expansion},
\[
    |r-\widehat\alpha|
    =
    O_J(\beta^2).
\]

From Proposition~\ref{prop:one-spike-fisher},
\[
    G(ru)^{1/2}
    =
    \sqrt{b(r)}I
    +
    \bigl(\sqrt{a(r)}-\sqrt{b(r)}\bigr)P_u.
\]
On the relevant compact radial interval, the scalar coefficients are
Lipschitz, while
\[
    \|P_u-P_{u_\star}\|_{\mathrm{op}}
    =
    \sin\angle(u,u_\star)
    =
    \frac{|\beta|}{r}.
\]
Hence
\[
    \left\|
        G(ru)^{1/2}
        -
        G(\widehat\alpha u_\star)^{1/2}
    \right\|_{\mathrm{op}}
    =
    O_J(|\beta|).
\]
The Gaussian-width perturbation bound in
Theorem~\ref{thm:probe-stability} proves
\eqref{eq:general-probe-near-aligned}.

If $T$ is orthogonally invariant, its Fisher width depends only on the
radius $r$, so the first-order rotation term disappears. Since
$|r-\widehat\alpha|=O_J(\beta^2)$,
\eqref{eq:rotation-invariant-quadratic} follows.

For the Euclidean ball,
\[
    r
    =
    \alpha+\frac{\beta^2}{2\alpha}
    +
    O_J(\beta^4),
\]
and \eqref{eq:matched-alpha-expansion} gives
\[
    r-\widehat\alpha
    =
    \frac12
    \left(
        \frac1\alpha
        +
        \frac{b(\alpha)}{g(\alpha)}
    \right)\beta^2
    +
    O_J(\beta^4).
\]
Taylor expansion of $W$ at $\alpha$ yields
\eqref{eq:ball-width-quadratic-expansion}.
\end{proof}

For a general fixed probe, rotation of the Fisher eigendirection can
contribute at order $|\beta|$. The theorem therefore provides an
$O(|\beta|)$ upper bound but does not assert that this order is sharp.
For rotation-invariant probes, the orientation dependence cancels and
the residual is quadratic.

We now determine the low-loss scale on the aligned branch.

\begin{proposition}[Aligned low-loss asymptotics]
\label{prop:aligned-late-asymptotics}
As $r\to\infty$,
\begin{align}
    L_\star(r)
    &\sim
    \frac{\pi^2}{6\sqrt{2\pi}}\frac1r,
    \label{eq:loss-asymptotic}\\
    b(r)
    &\sim
    \frac{1}{\sqrt{2\pi}}\frac1r,
    \label{eq:b-asymptotic}\\
    a(r)
    &\sim
    \frac{\pi^2}{3\sqrt{2\pi}}\frac1{r^3}.
    \label{eq:a-asymptotic}
\end{align}
Consequently,
\begin{equation}
    \kappa(r)
    \sim
    \frac{\pi^2}{3r^2},
    \qquad
    \frac{b(r)}{L_\star(r)}
    \longrightarrow
    \frac6{\pi^2}.
    \label{eq:kappa-late}
\end{equation}
For $d\geq2$,
\begin{equation}
    w_F(B_2^d;ru_\star)
    \sim
    (2\pi)^{-1/4}
    \mathbb E[\chi_{d-1}]
    r^{-1/2},
    \label{eq:ball-width-r-asymptotic}
\end{equation}
where $\chi_{d-1}$ denotes a chi random variable with $d-1$ degrees of
freedom. Therefore
\begin{equation}
    \frac{
        w_F(B_2^d;ru_\star)
    }{
        \sqrt{L_\star(r)}
    }
    \longrightarrow
    \frac{\sqrt6}{\pi}
    \mathbb E[\chi_{d-1}].
    \label{eq:ball-width-loss-asymptotic}
\end{equation}
\end{proposition}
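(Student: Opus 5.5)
Each asymptotic comes from a rescaling $s=rz$ followed by dominated convergence, as in Lemma~\ref{lem:large-radius-drift}. We treat the three scalar quantities separately and then assemble the width.

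\textbf{Step 1: $L_\star(r)$.} Write
\[
L_\star(r)=2\int_0^\infty \ell(rz)\phi(z)\,dz=\frac2r\int_0^\infty \ell(s)\phi(s/r)\,ds .
\]
Since $\ell(s)\le e^{-s}$ is integrable on $[0,\infty)$ and $\phi\le\phi(0)$, dominated convergence gives
\[
rL_\star(r)\to 2\phi(0)\int_0^\infty\log(1+e^{-s})\,ds=\frac{2}{\sqrt{2\pi}}\cdot\frac{\pi^2}{12}.
\]
This is \eqref{eq:loss-asymptotic}. The value $\int_0^\infty\log(1+e^{-s})ds=\eta(2)=\pi^2/12$ comes from expanding the logarithm in a series.

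\textbf{Step 2: $b(r)$ and $a(r)$.} The asymptotic \eqref{eq:b-asymptotic} is already Lemma~\ref{lem:large-radius-drift}. For $a$, the same substitution gives
\[
a(r)=\frac{2}{r^3}\int_0^\infty s^2v(s)\phi(s/r)\,ds .
\]
Because $s^2v(s)\le s^2e^{-s}$ is integrable, $r^3a(r)\to 2\phi(0)\int_0^\infty s^2v(s)\,ds$. Integrating by parts, $v=-\frac{d}{ds}\sigma(-s)$, gives
\[
\int_0^\infty s^2v(s)\,ds=2\int_0^\infty \frac{s}{1+e^s}\,ds=\frac{\pi^2}{6}.
\]
Hence $a(r)\sim \frac{\pi^2}{3\sqrt{2\pi}}r^{-3}$. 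The two limits in \eqref{eq:kappa-late} are then quotients of the established asymptotics: $\kappa(r)=a(r)/b(r)\sim\pi^2/(3r^2)$, and $b/L_\star\to (1/\sqrt{2\pi})/(\pi^2/(6\sqrt{2\pi}))=6/\pi^2$.

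\textbf{Step 3: the ball width.} We use \eqref{eq:ball-width-radial} and factor out $\sqrt{b(r)}$:
\[
W(r)=\sqrt{b(r)}\;\mathbb E\sqrt{\kappa(r)g_1^2+\textstyle\sum_{j\ge2}g_j^2}.
\]
The integrand is dominated by $\|g\|_2$, because $\kappa\le1$, and it converges pointwise to $\chi_{d-1}=(\sum_{j\ge2}g_j^2)^{1/2}$ as $\kappa\to0$. Dominated convergence therefore gives $W(r)\sim\sqrt{b(r)}\,\mathbb E[\chi_{d-1}]$. Here $d\ge2$ guarantees that the limit $\mathbb E[\chi_{d-1}]$ is positive. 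Combining with Step 2 yields $W(r)\sim(2\pi)^{-1/4}\mathbb E[\chi_{d-1}]\,r^{-1/2}$, which is \eqref{eq:ball-width-r-asymptotic}.

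\textbf{Step 4: the ratio.} Dividing by $\sqrt{L_\star(r)}$ gives
\[
\frac{W(r)}{\sqrt{L_\star(r)}}\sim\sqrt{\frac{b(r)}{L_\star(r)}}\;\mathbb E[\chi_{d-1}]\to\frac{\sqrt6}{\pi}\mathbb E[\chi_{d-1}],
\]
which is \eqref{eq:ball-width-loss-asymptotic}.

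The only step needing care is the constant evaluation for $a$. The integration by parts must be done correctly: the boundary terms $s^2\sigma(-s)$ vanish at both ends. It must also be checked that $\int_0^\infty s/(1+e^s)\,ds=\pi^2/12$, which already appears in Lemma~\ref{lem:large-radius-drift}. Everything else is routine dominated convergence with exponential tails.
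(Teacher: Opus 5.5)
Your proposal is correct and follows the paper's proof essentially step for step. Both use the rescaling $s=rz$ with dominated convergence for $L_\star$, $b$ and $a$, take quotients for $\kappa$ and $b/L_\star$, and factor out $\sqrt{b(r)}$ before applying dominated convergence to the ball width; you additionally spell out the evaluations $\int_0^\infty\log(1+e^{-s})\,ds=\int_0^\infty s/(1+e^s)\,ds=\pi^2/12$ and $\int_0^\infty s^2v(s)\,ds=\pi^2/6$, the dominating functions, and why $d\geq2$ is needed, all of which the paper leaves implicit.
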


\begin{proof}
The asymptotic for $b(r)$ was obtained in
Lemma~\ref{lem:large-radius-drift}. For the loss,
\[
    L_\star(r)
    =
    \frac2r
    \int_0^\infty
    \log(1+e^{-s})\phi(s/r)\,ds.
\]
Using
\[
    \int_0^\infty
    \log(1+e^{-s})\,ds
    =
    \frac{\pi^2}{12},
\]
dominated convergence gives \eqref{eq:loss-asymptotic}.

Similarly,
\[
    a(r)
    =
    \frac2{r^3}
    \int_0^\infty
    s^2v(s)\phi(s/r)\,ds,
\]
and
\[
    \int_0^\infty s^2v(s)\,ds
    =
    \frac{\pi^2}{6},
\]
which yields \eqref{eq:a-asymptotic}. The statements in
\eqref{eq:kappa-late} follow by taking ratios.

For the ball,
\[
    W(r)
    =
    \mathbb E
    \sqrt{
        a(r)g_1^2
        +
        b(r)\sum_{j=2}^d g_j^2
    }.
\]
Since $a(r)/b(r)\to0$,
\[
    \frac{W(r)}{\sqrt{b(r)}}
    \longrightarrow
    \mathbb E[\chi_{d-1}],
\]
by dominated convergence. Combining this with
\eqref{eq:b-asymptotic} gives
\eqref{eq:ball-width-r-asymptotic}, and division by
\eqref{eq:loss-asymptotic} gives
\eqref{eq:ball-width-loss-asymptotic}.
\end{proof}

The preceding proposition concerns the exactly aligned ray.
The dynamical selection theorem allows the same leading law to be
transferred to an arbitrary population gradient-flow trajectory.

\begin{corollary}[Dynamic Fisher-width equation of state]
\label{cor:dynamic-fisher-eos}
Let $\theta(t)$ follow population gradient flow. Then, for $d\geq2$,
\begin{equation}
    \frac{
        w_F(B_2^d;\theta(t))
    }{
        \sqrt{L(\theta(t))}
    }
    \longrightarrow
    \frac{\sqrt6}{\pi}
    \mathbb E[\chi_{d-1}].
    \label{eq:dynamic-fisher-eos}
\end{equation}
Moreover,
\begin{equation}
    L(\theta(t))
    \asymp
    t^{-1/3},
    \qquad
    w_F(B_2^d;\theta(t))
    \asymp
    t^{-1/6}.
    \label{eq:dynamic-loss-width-rates}
\end{equation}
\end{corollary}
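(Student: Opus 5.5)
Since the Euclidean ball is rotation invariant, $w_F(B_2^d;\theta)=W(\|\theta\|_2)$ by \eqref{eq:ball-width-radial}, so the numerator depends only on $r(t)=\|\theta(t)\|_2$. We therefore write
\[
    \frac{w_F(B_2^d;\theta(t))}{\sqrt{L(\theta(t))}}
    =
    \frac{W(r(t))}{\sqrt{L_\star(r(t))}}
    \cdot
    \sqrt{\frac{L_\star(r(t))}{L(\alpha(t),\beta(t))}}.
\]
By Theorem~\ref{thm:asymptotic-dynamical-selection}, $\alpha(t)\to\infty$ and $\beta(t)\to0$, so $r(t)\to\infty$. The first factor then converges to $\frac{\sqrt6}{\pi}\mathbb E[\chi_{d-1}]$ by \eqref{eq:ball-width-loss-asymptotic} of Proposition~\ref{prop:aligned-late-asymptotics}. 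It remains to show that the second factor tends to one, i.e.\ $L(\alpha,\beta)/L_\star(r)\to1$ along the trajectory.

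\textbf{Lower bound.} Theorem~\ref{thm:global-aligned-envelope} gives $L(\alpha,\beta)\geq L_\star(r)$; note $L(\theta(t))<\log2$ eventually, since $L$ decreases along gradient flow and tends to zero. \textbf{Upper bound.} We control the excess through $\partial_\beta L=\beta\,b(\sqrt{\alpha^2+\beta^2})$ from \eqref{eq:beta-loss-derivative}. Integrating in $\beta$ at fixed $\alpha$, and using that $b$ is decreasing,
\[
    L(\alpha,\beta)-L_\star(\alpha)
    =
    \int_0^{|\beta|} s\,b\bigl(\sqrt{\alpha^2+s^2}\bigr)\,ds
    \leq
    \tfrac12 b(\alpha)\beta^2 .
\]
Since $b(\alpha)\sim(2\pi)^{-1/2}\alpha^{-1}$ and $L_\star(\alpha)\sim c_L\alpha^{-1}$ by Proposition~\ref{prop:aligned-late-asymptotics}, the ratio $b(\alpha)\beta^2/L_\star(\alpha)$ is $O(\beta^2)\to0$. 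Finally, $r/\alpha\to1$ and $L_\star(r)\sim c_L/r$ give $L_\star(\alpha)/L_\star(r)\to1$. Combining,
\[
    1\leq \frac{L(\alpha,\beta)}{L_\star(r)}
    \leq
    \frac{L_\star(\alpha)}{L_\star(r)}\bigl(1+O(\beta^2)\bigr)\longrightarrow1 .
\]
This proves \eqref{eq:dynamic-fisher-eos}.

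For the rates, $L(\theta(t))\sim L_\star(r(t))\sim c_L/r(t)$, and $r(t)\sim\alpha(t)\sim(3c_At)^{1/3}$ by \eqref{eq:alpha-cubic-rate}. Hence $L\asymp t^{-1/3}$, and then $w_F\asymp\sqrt{L}\asymp t^{-1/6}$ by the limit just proved.

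The main point to verify is the uniformity of the upper bound. Theorem~\ref{thm:near-aligned-stability} is only uniform on compact $\alpha$-intervals, so it cannot be used directly as $\alpha\to\infty$. The integral identity above avoids this, needing only the monotonicity of $b$ and the fact that $\beta$ stays bounded. One could also note that $\beta$ decays stretched-exponentially, so even crude bounds would suffice.
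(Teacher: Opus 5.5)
Your argument is correct, but it controls the gap between $L(\theta(t))$ and the aligned loss differently from the paper.

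\textbf{The paper's route.} The paper splits into the cases $\beta(0)=0$ and $\beta(0)\neq0$. In the second case it uses the quantitative rate \eqref{eq:beta-alpha-rate} to get $|\beta(t)|=o(\alpha(t)^{-1})$. It then needs only the crude first-order bound $|L(\alpha,\beta)-L_\star(\alpha)|\leq|\beta|\,\mathbb E|Z_2|$ from the $1$-Lipschitz property of $\ell$. This bound is small relative to $L_\star(\alpha)\asymp\alpha^{-1}$ precisely because $\beta$ decays stretched-exponentially.

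\textbf{Your route.} You instead integrate the exact identity \eqref{eq:beta-loss-derivative} and use the monotonicity of $b$. This gives the second-order estimate $0\leq L(\alpha,\beta)-L_\star(\alpha)\leq\frac12 b(\alpha)\beta^2$, valid for every $\alpha>0$. Since $b(\alpha)/L_\star(\alpha)\to6/\pi^2$, the relative error is $O(\beta^2)$. So you need only the qualitative facts $\alpha\to\infty$ and $\beta\to0$, with no case split and no appeal to \eqref{eq:beta-alpha-rate}.

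\textbf{Comparison.} Your route works for any asymptotically aligned trajectory, whatever the rate of alignment. The paper's route trades this generality for the simplest possible loss bound. You also keep everything in the radius $r$ and quote \eqref{eq:ball-width-loss-asymptotic} directly, whereas the paper rewrites numerator and denominator in terms of $\alpha$; this difference is cosmetic.

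\textbf{Two small points.} First, the lower half of your own integral estimate already gives $L\geq L_\star(\alpha)$. Together with $L_\star(\alpha)/L_\star(r)\to1$, this makes the appeal to Theorem~\ref{thm:global-aligned-envelope} unnecessary. It also removes the need to argue that $L<\log2$ eventually, whose ``tends to zero'' part itself rests on the upper bound you prove afterwards. Second, your closing remark that only boundedness of $\beta$ is needed is imprecise. The estimate is uniform in $\alpha$ with no condition on $\beta$ at all, but the ratio tends to one only because $\beta\to0$. Your main argument does use this correctly.
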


\begin{proof}
By Theorem~\ref{thm:asymptotic-dynamical-selection},
\[
    \alpha(t)\to\infty,
    \qquad
    \beta(t)\to0,
    \qquad
    \frac{r(t)}{\alpha(t)}\to1.
\]

If $\beta(0)=0$, then
\eqref{eq:beta-dynamics} gives $\beta(t)\equiv0$, and hence
\[
    L(\theta(t))=L_\star(\alpha(t)).
\]
If $\beta(0)\neq0$, then
\eqref{eq:beta-alpha-rate} implies
\[
    |\beta(t)|
    =
    \exp\!\left\{
        -\left(\frac{3}{\pi^2}+o(1)\right)\alpha(t)^2
    \right\}
    =
    o(\alpha(t)^{-1}).
\]
Since the logistic loss is $1$-Lipschitz in its margin,
\[
    |L(\alpha,\beta)-L_\star(\alpha)|
    \leq
    |\beta|\,\mathbb E|Z_2|.
\]
Thus, in both cases,
\[
    L(\theta(t))
    \sim
    L_\star(\alpha(t))
    \sim
    \frac{\pi^2}{6\sqrt{2\pi}}
    \frac1{\alpha(t)}.
\]

The Euclidean-ball Fisher width depends only on
$r=\|\theta\|_2$. Since $r(t)/\alpha(t)\to1$,
Proposition~\ref{prop:aligned-late-asymptotics} gives
\[
    w_F(B_2^d;\theta(t))
    \sim
    (2\pi)^{-1/4}
    \mathbb E[\chi_{d-1}]
    \alpha(t)^{-1/2}.
\]
Taking the ratio proves \eqref{eq:dynamic-fisher-eos}. Finally,
\eqref{eq:alpha-cubic-rate} gives
$\alpha(t)\asymp t^{1/3}$, which yields
\eqref{eq:dynamic-loss-width-rates}.
\end{proof}

The controlled model therefore gives three distinct levels of
structure. First, the aligned branch is an extremal envelope on each
low-loss level set. Second, population gradient flow asymptotically
selects that branch. Third, the Euclidean-ball Fisher width along the
actual trajectory satisfies the explicit late-time relation
\eqref{eq:dynamic-fisher-eos}. These conclusions are specific to the
population Gaussian--logistic model and do not imply a global
loss-only law outside this setting. Section~\ref{sec:finite-sample}
next studies finite-sample evaluation and numerical validation of the
corresponding population predictions.

\section{Finite-Sample Control and Controlled Validation}
\label{sec:finite-sample}

The results of Section~\ref{sec:gaussian-theory} are population
statements. This section separates three different questions. First,
we quantify how the empirical Fisher trace behaves when a fixed
parameter is evaluated on an independent Gaussian sample. Second, we
numerically test the population predictions for the global aligned
envelope and dynamical selection. Third, we distinguish these
population diagnostics from quantities computed on held-out
finite-sample checkpoints.

The theorem-facing numerical experiments use population quantities
computed from the two-coordinate representation and Gaussian
quadrature. They therefore test predictions derived in
Section~\ref{sec:gaussian-theory} without introducing sampling noise.
Held-out empirical quantities are treated separately and are not used
to estimate the population asymptotic constants.

\subsection{Finite-Sample Trace Control}
\label{sec:finite-sample-trace}

For fixed vectors $x_1,\ldots,x_n\in\mathbb R^d$, define
\begin{equation}
    \widehat G_n(\theta)
    :=
    \frac1n
    \sum_{i=1}^n
    v(\langle\theta,x_i\rangle)x_ix_i^\top
    \label{eq:empirical-fisher}
\end{equation}
and
\begin{equation}
    \widehat m_n(\theta)
    :=
    \frac1n
    \sum_{i=1}^n
    v(\langle\theta,x_i\rangle).
    \label{eq:empirical-mean-variance}
\end{equation}
Taking the trace gives the deterministic identity
\begin{equation}
    \operatorname{Tr}\widehat G_n(\theta)
    =
    \frac1n
    \sum_{i=1}^n
    v(\langle\theta,x_i\rangle)\|x_i\|_2^2,
    \label{eq:exact-empirical-trace}
\end{equation}
and therefore
\begin{equation}
    \operatorname{Tr}\widehat G_n(\theta)
    -
    d\,\widehat m_n(\theta)
    =
    \frac1n
    \sum_{i=1}^n
    v(\langle\theta,x_i\rangle)
    \bigl(\|x_i\|_2^2-d\bigr).
    \label{eq:trace-variance-residual}
\end{equation}

Assume now
\[
    X_1,\ldots,X_n
    \stackrel{\mathrm{iid}}{\sim}
    N(0,I_d),
\]
and let $\theta$ be fixed independently of this sample. Write
\[
    r=\|\theta\|_2,
    \qquad
    m(r):=b(r),
    \qquad
    \kappa(r):=\frac{a(r)}{b(r)}.
\]

\begin{proposition}[Pointwise empirical trace control]
\label{prop:pointwise-trace-concentration}
There exist universal constants $C,c>0$ such that
\begin{equation}
    \mathbb E\!\left[
        \operatorname{Tr}\widehat G_n(\theta)
        -
        d\,\widehat m_n(\theta)
    \right]
    =
    m(r)(\kappa(r)-1).
    \label{eq:trace-bias}
\end{equation}
Moreover, for every $\delta\in(0,1)$, with probability at least
\[
    1-\delta-2e^{-cnm(r)},
\]
\begin{align}
&
\left|
    \operatorname{Tr}\widehat G_n(\theta)
    -
    d\,\widehat m_n(\theta)
    -
    m(r)(\kappa(r)-1)
\right|
\nonumber\\
&\qquad\leq
C\left[
    \sqrt{
        \frac{
            d\,m(r)\log(4/\delta)
        }{n}
    }
    +
    \frac{\log(4/\delta)}{n}
\right].
\label{eq:pointwise-trace-concentration}
\end{align}
With probability at least
\[
    1-\delta-4e^{-cnm(r)},
\]
one also has
\begin{align}
\left|
    \frac{
        \operatorname{Tr}\widehat G_n(\theta)
    }{
        d\,\widehat m_n(\theta)
    }
    -1
\right|
\leq
C\left[
    \frac1d
    +
    \sqrt{
        \frac{
            \log(4/\delta)
        }{
            ndm(r)
        }
    }
    +
    \frac{
        \log(4/\delta)
    }{
        ndm(r)
    }
\right].
\label{eq:relative-trace-concentration}
\end{align}
\end{proposition}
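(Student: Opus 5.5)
The plan is to condition on the direction of $\theta$ and split the residual \eqref{eq:trace-variance-residual} into a longitudinal part and a transverse part. By rotational invariance, as in the proof of Proposition~\ref{prop:one-spike-fisher}, take $\theta=ru$ and write $X_i=Z_iu+U_i$, where $Z_i\sim N(0,1)$ and $U_i\sim N(0,I_{d-1})$ on $u^\perp$ are independent. Set $v_i:=v(rZ_i)\in[0,\tfrac14]$. Then
\[
    \|X_i\|_2^2-d=(Z_i^2-1)+\bigl(\|U_i\|_2^2-(d-1)\bigr),
\]
so the residual equals $S_1+S_2$, with
\[
    S_1=\frac1n\sum_i v_i(Z_i^2-1),
    \qquad
    S_2=\frac1n\sum_i v_i\bigl(\|U_i\|_2^2-(d-1)\bigr).
\]
Since $U_i$ is independent of $Z_i$ and $\mathbb E\|U_i\|_2^2=d-1$, we get $\mathbb E S_2=0$. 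Also $\mathbb E S_1=a(r)-b(r)=m(r)(\kappa(r)-1)$ by \eqref{eq:a-b-definitions}, which proves \eqref{eq:trace-bias}.

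\emph{Longitudinal term.} The summands $v_i(Z_i^2-1)$ are i.i.d.\ and have $\psi_1$-norm bounded by a universal constant, because $v\le\tfrac14$. For the variance, $v^2\le v/4$ gives
\[
    \mathbb E\bigl[v(rZ)^2(Z^2-1)^2\bigr]\le\tfrac14\,\mathbb E\bigl[v(rZ)(Z^4+1)\bigr].
\]
Since $v(rz)$ is decreasing in $|z|$ and $z^4$ is increasing, the same Chebyshev association argument used for $\kappa\le1$ gives $\mathbb E[v(rZ)Z^4]\le3b(r)=3m(r)$. So the variance is $O(m(r))$. Bernstein's inequality for sub-exponential variables then yields, with probability at least $1-\delta/2$,
\[
    |S_1-\mathbb ES_1|\le C\Bigl[\sqrt{m\log(4/\delta)/n}+\log(4/\delta)/n\Bigr].
\]
This is dominated by the right-hand side of \eqref{eq:pointwise-trace-concentration}.

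\emph{Transverse term.} This is the step I expect to be the main obstacle. A naive Bernstein bound would give $\sqrt d\log(1/\delta)/n$ instead of $\log(1/\delta)/n$, because $\|U\|^2$ has $\psi_1$-norm of order $\sqrt d$. To avoid this, condition on $(Z_i)$. Then $nS_2$ is a weighted centered chi-square $\sum_i v_i(\|U_i\|^2-(d-1))$ with nonnegative weights $v_i$ and total degrees of freedom $n(d-1)$. The Laurent--Massart inequality gives, with conditional probability at least $1-\delta/2$,
\[
    |S_2|\le\frac{2}{n}\sqrt{(d-1)\textstyle\sum_i v_i^2\,\log(4/\delta)}+\frac{2\max_i v_i}{n}\log(4/\delta).
\]
Now $\max_i v_i\le\tfrac14$ and $\sum_iv_i^2\le\tfrac14\sum_iv_i=\tfrac n4\widehat m_n$. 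Finally, a multiplicative Chernoff bound for the bounded i.i.d.\ variables $v_i$, which have mean $m(r)$, gives $\widehat m_n\le2m(r)$ with probability at least $1-e^{-cnm(r)}$. On this event the transverse deviation is $\le C[\sqrt{dm\log(4/\delta)/n}+\log(4/\delta)/n]$. A union bound over the two $\delta/2$ events and the upper Chernoff event gives \eqref{eq:pointwise-trace-concentration}, with failure probability at most $\delta+e^{-cnm}\le\delta+2e^{-cnm}$.

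\emph{Relative bound.} Add the lower Chernoff event $\widehat m_n\ge m(r)/2$, again of probability at least $1-e^{-cnm}$; allowing the extra exponential terms gives the stated $1-\delta-4e^{-cnm}$. Write
\[
    \frac{\operatorname{Tr}\widehat G_n}{d\widehat m_n}-1=\frac{m(\kappa-1)+\Delta}{d\,\widehat m_n},
\]
where $\Delta$ is the deviation bounded in \eqref{eq:pointwise-trace-concentration}. Since $|\kappa-1|\le1$ and $\widehat m_n\ge m/2$, the bias part is at most $2/d$. The deviation part, divided by $dm/2$, becomes $C[\sqrt{\log(4/\delta)/(ndm)}+\log(4/\delta)/(ndm)]$. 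Together these give \eqref{eq:relative-trace-concentration}.
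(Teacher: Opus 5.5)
Your proposal is correct and follows the paper's proof essentially step for step. Both use the same longitudinal/transverse split of $\|X_i\|_2^2-d$, the same variance bound $\operatorname{Var}\bigl(v(rZ)(Z^2-1)\bigr)\le m(r)$ via $v^2\le v/4$ and the association inequality $\mathbb E[v(rZ)Z^4]\le 3m(r)$, the same conditioning on $(Z_i)$ on the event $\widehat m_n\le 2m(r)$, and the same lower event $\widehat m_n\ge m(r)/2$ for the ratio bound. The differences are cosmetic: you use Laurent--Massart for the conditional weighted chi-square where the paper applies conditional Bernstein to the $n(d-1)$ summands $v_i(U_{ij}^2-1)$, both giving variance proxy $(d-1)\sum_i v_i^2$ and scale $\max_i v_i$, and you use multiplicative Chernoff instead of Bernstein for the events on $\widehat m_n$.
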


The proof is given in
Appendix~\ref{app:finite-sample-details}. It uses the decomposition
\[
\begin{aligned}
&
v(rZ_i)\bigl(\|X_i\|_2^2-d\bigr)
\\
&\qquad=
v(rZ_i)(Z_i^2-1)
+
v(rZ_i)
\bigl(\|U_i\|_2^2-(d-1)\bigr),
\end{aligned}
\]
followed by scalar and conditional Bernstein bounds.

The deterministic bias in \eqref{eq:trace-bias} reflects the one-spike
population Fisher geometry:
\[
    \frac{
        |m(r)(\kappa(r)-1)|
    }{
        d\,m(r)
    }
    \leq
    \frac1d.
\]
The stochastic part becomes harder to estimate when $m(r)$ is small.
The effective number of observations carrying non-negligible logistic
weight is governed by $nm(r)$, while the relative trace bound also
benefits from the factor $d$ in
\eqref{eq:relative-trace-concentration}.

\subsection{Independent Evaluation at Finite Checkpoints}
\label{sec:independent-evaluation}

Proposition~\ref{prop:pointwise-trace-concentration} requires the
evaluated parameter to be independent of the data used to estimate its
Fisher matrix. To apply the result to a trained trajectory, we separate
training and evaluation samples,
\[
    \mathcal D_{\mathrm{tr}},
    \qquad
    \mathcal D_{\mathrm{ev}}.
\]
The optimizer uses only $\mathcal D_{\mathrm{tr}}$ and records
\[
    \Theta_{\mathrm{tr}}
    =
    \{\theta_{t_1},\ldots,\theta_{t_N}\}.
\]
At each checkpoint we compute
\begin{align}
    \widehat L_{\mathrm{ev}}(\theta)
    &:=
    \frac1{n_{\mathrm{ev}}}
    \sum_{i=1}^{n_{\mathrm{ev}}}
    \ell\!\left(
        Y_i^{\mathrm{ev}}
        \langle\theta,X_i^{\mathrm{ev}}\rangle
    \right),
    \label{eq:evaluation-loss}\\
    \widehat G_{\mathrm{ev}}(\theta)
    &:=
    \frac1{n_{\mathrm{ev}}}
    \sum_{i=1}^{n_{\mathrm{ev}}}
    v(\langle\theta,X_i^{\mathrm{ev}}\rangle)
    X_i^{\mathrm{ev}}X_i^{\mathrm{ev}\top},
    \label{eq:evaluation-fisher}\\
    \widehat m_{\mathrm{ev}}(\theta)
    &:=
    \frac1{n_{\mathrm{ev}}}
    \sum_{i=1}^{n_{\mathrm{ev}}}
    v(\langle\theta,X_i^{\mathrm{ev}}\rangle).
    \label{eq:evaluation-variance}
\end{align}

Conditional on $\mathcal D_{\mathrm{tr}}$, the recorded checkpoints
are fixed. The same evaluation sample may be reused across
checkpoints; the resulting estimates are dependent, but independence
between checkpoints is not required for a union bound.

\begin{corollary}[Finite-checkpoint control]
\label{cor:finite-checkpoint-control}
Let
\[
    m_j=m(\|\theta_{t_j}\|_2),
    \qquad
    m_{\min}:=\min_{1\leq j\leq N}m_j>0,
\]
and set
\[
    u_N=\log\left(\frac{4N}{\delta}\right).
\]
Conditional on $\mathcal D_{\mathrm{tr}}$, with probability at least
\[
    1-\delta-2N e^{-c n_{\mathrm{ev}}m_{\min}},
\]
the bound
\begin{align}
&
\left|
    \operatorname{Tr}
    \widehat G_{\mathrm{ev}}(\theta_{t_j})
    -
    d\,\widehat m_{\mathrm{ev}}(\theta_{t_j})
    -
    m_j(\kappa_j-1)
\right|
\nonumber\\
&\qquad\leq
C\left[
    \sqrt{
        \frac{
            d\,m_j u_N
        }{
            n_{\mathrm{ev}}
        }
    }
    +
    \frac{u_N}{n_{\mathrm{ev}}}
\right]
\label{eq:finite-checkpoint-bound}
\end{align}
holds simultaneously for $j=1,\ldots,N$, where
\[
    \kappa_j=\kappa(\|\theta_{t_j}\|_2).
\]
\end{corollary}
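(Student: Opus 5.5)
Conditional on $\mathcal D_{\mathrm{tr}}$, the checkpoints $\theta_{t_1},\ldots,\theta_{t_N}$ are deterministic vectors. They are independent of the evaluation sample $X_1^{\mathrm{ev}},\ldots,X_{n_{\mathrm{ev}}}^{\mathrm{ev}}\stackrel{\mathrm{iid}}{\sim}N(0,I_d)$. The plan is therefore to apply Proposition~\ref{prop:pointwise-trace-concentration} separately to each checkpoint, at a reduced confidence level, and combine the results with a union bound. No independence between the events for different $j$ is needed, so reusing the same evaluation sample is harmless.

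Concretely, fix $j$ and apply the first part of Proposition~\ref{prop:pointwise-trace-concentration} with $\theta=\theta_{t_j}$, $n=n_{\mathrm{ev}}$, and confidence parameter $\delta_j:=\delta/N$. Since $\log(4/\delta_j)=\log(4N/\delta)=u_N$, the bound \eqref{eq:pointwise-trace-concentration} becomes exactly \eqref{eq:finite-checkpoint-bound}, with $m(r)=m_j$ and $\kappa(r)=\kappa_j$. The proposition gives this bound on an event $E_j$ whose complement has conditional probability at most $\delta/N+2e^{-cn_{\mathrm{ev}}m_j}$. Because $m_j\geq m_{\min}$, this is at most $\delta/N+2e^{-cn_{\mathrm{ev}}m_{\min}}$.

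Summing over $j=1,\ldots,N$ bounds the conditional probability of $\bigcup_j E_j^c$ by $\delta+2Ne^{-cn_{\mathrm{ev}}m_{\min}}$. This is the claimed failure probability. The constants $C,c$ stay universal because the proposition's constants do not depend on $\theta$, $\delta$, or $n$.

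The only point needing care is the conditioning. The proposition is stated for a fixed $\theta$ chosen independently of the sample. We therefore apply it under the conditional law given $\mathcal D_{\mathrm{tr}}$: because $\mathcal D_{\mathrm{ev}}$ is independent of $\mathcal D_{\mathrm{tr}}$, this law is still i.i.d.\ $N(0,I_d)$, and each $\theta_{t_j}$ is a constant under it. This also explains the assumption $m_{\min}>0$: it holds automatically, since $b(r)>0$ for every finite $r$, and it keeps the exponential failure term meaningful. Everything else is bookkeeping; I do not expect a genuine obstacle.
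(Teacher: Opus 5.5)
Your proposal is correct and is essentially the paper's argument. You condition on $\mathcal D_{\mathrm{tr}}$, so the checkpoints are fixed and independent of the i.i.d.\ Gaussian evaluation sample; apply the absolute bound of Proposition~\ref{prop:pointwise-trace-concentration} at level $\delta/N$, which turns $\log(4/\delta)$ into $u_N$; use $m_j\geq m_{\min}$ to control each exponential failure term; and take a union bound over $j=1,\ldots,N$.
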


The proof follows from
Proposition~\ref{prop:pointwise-trace-concentration} with failure
probability $\delta/N$ and a union bound. In the low-variance regime,
a useful evaluation sample must satisfy at least
\begin{equation}
    n_{\mathrm{ev}}m_{\min}
    \gtrsim
    \log(N/\delta).
    \label{eq:effective-sample-condition}
\end{equation}
Thus a fixed evaluation sample eventually loses resolution as the
population Fisher scale becomes small. The corollary is simultaneous
over the recorded finite set of checkpoints; it is not a
trajectory-uniform empirical-process result.

\subsection{Population Check of the Global Aligned Envelope}
\label{sec:envelope-validation}

We next test the pre-existing population prediction of
Theorem~\ref{thm:global-aligned-envelope}. Fix
\[
    \ell\in(0,\log2),
    \qquad
    r_\ell=L_\star^{-1}(\ell).
\]
For several values $\alpha>r_\ell$, we solve
\begin{equation}
    L(\alpha,\beta_\alpha)=\ell,
    \qquad
    \beta_\alpha>0.
    \label{eq:equal-loss-level-set}
\end{equation}
Existence and uniqueness follow from
\[
    L(\alpha,0)<\ell,
    \qquad
    \partial_\beta L(\alpha,\beta)>0
    \quad(\beta>0),
\]
together with
$L(\alpha,\beta)\to\infty$ as $\beta\to\infty$.

For each equal-loss state, let
\[
    r_\alpha
    =
    \sqrt{\alpha^2+\beta_\alpha^2}.
\]
Theorem~\ref{thm:global-aligned-envelope} predicts
\begin{equation}
    r_\alpha-r_\ell>0
    \label{eq:envelope-radius-gap}
\end{equation}
and
\begin{equation}
    W(r_\ell)-W(r_\alpha)>0,
    \qquad
    W(r):=w_F(B_2^d;ru).
    \label{eq:envelope-width-gap}
\end{equation}
We also evaluate the quantitative penalty
\begin{equation}
    R_{\mathrm{env}}(\alpha,\beta)
    :=
    \frac{
        L(\alpha,\beta)-L_\star(r)
    }{
        \tfrac12 b(r)\beta^2
    },
    \qquad
    r=\sqrt{\alpha^2+\beta^2},
    \label{eq:envelope-penalty-ratio}
\end{equation}
for which the theorem gives
\[
    R_{\mathrm{env}}(\alpha,\beta)>1
    \qquad
    (\beta\neq0).
\]

The near-aligned expansions of
Section~\ref{sec:stability-asymptotics} provide a more precise
prediction close to the aligned point. Indeed,
\[
    r
    =
    \alpha+\frac{\beta^2}{2\alpha}+O(\beta^4)
\]
and
\[
    L(\alpha,\beta)
    =
    L_\star(\alpha)
    +
    \frac12b(\alpha)\beta^2
    +
    O(\beta^4).
\]
Hence
\[
    L(\alpha,\beta)-L_\star(r)
    =
    \frac12
    \left(
        b(\alpha)+\frac{g(\alpha)}{\alpha}
    \right)\beta^2
    +
    O(\beta^4),
\]
where
\[
    g(\alpha)=-L_\star'(\alpha).
\]
Along the level set
$L(\alpha,\beta)=\ell$, one has
$\alpha\to r_\ell$ as $\beta\to0$, and therefore
\begin{equation}
    R_{\mathrm{env}}(\alpha,\beta)
    \longrightarrow
    1+
    \frac{
        g(r_\ell)
    }{
        r_\ell b(r_\ell)
    }.
    \label{eq:envelope-local-limit}
\end{equation}
Thus the lower bound in
\eqref{eq:global-misalignment-penalty} is not expected to be
asymptotically tight with ratio one.

All quantities in this experiment are population quantities evaluated
by Gaussian quadrature; no finite-sample approximation is used.

\paragraph{Numerical check.}
For the target loss
\[
    \ell=0.3,
\]
the quadrature computation gives
\[
    r_\ell
    =
    1.71399217.
\]
At every tested non-aligned point on the equal-loss level set,
\[
    r_\alpha>r_\ell,
    \qquad
    W(r_\ell)>W(r_\alpha),
    \qquad
    R_{\mathrm{env}}>1,
\]
in agreement with
Theorem~\ref{thm:global-aligned-envelope}.

The local prediction in
\eqref{eq:envelope-local-limit} is
\[
    1+
    \frac{
        g(r_\ell)
    }{
        r_\ell b(r_\ell)
    }
    =
    1.40838784.
\]
At the tested point closest to the aligned state, the numerical ratio
is
\[
    R_{\mathrm{env}}
    =
    1.40832390.
\]
The purpose of this calculation is to check the previously derived
level-set expansion; the local constant is determined by the theory,
not fitted from the numerical data.

\subsection{Population Check of Dynamical Selection and the Late-Loss Law}
\label{sec:dynamical-validation}

We now test
Theorem~\ref{thm:asymptotic-dynamical-selection} and
Corollary~\ref{cor:dynamic-fisher-eos}. Since both results concern
population gradient flow, we numerically integrate the exact
two-coordinate dynamics
\[
    \dot\alpha
    =
    A(\alpha,\beta),
    \qquad
    \dot\beta
    =
    -\beta b(r),
    \qquad
    r=\sqrt{\alpha^2+\beta^2},
\]
with $A$ and $b$ evaluated by Gaussian quadrature. For
$\beta(0)\neq0$, we equivalently evolve
\[
    q(t):=\log|\beta(t)|,
    \qquad
    \dot q(t)=-b(r(t)),
\]
when evaluating the orthogonal-decay diagnostic.

Theorem~\ref{thm:asymptotic-dynamical-selection} predicts
\[
    \alpha(t)^3
    \sim
    c_\alpha t,
    \qquad
    c_\alpha
    :=
    \frac{\pi^2}{2\sqrt{2\pi}},
\]
and, when $\beta(0)\neq0$,
\[
    -\log|\beta(t)|
    \sim
    c_\beta\alpha(t)^2,
    \qquad
    c_\beta
    :=
    \frac3{\pi^2}.
\]
To reduce the influence of the initial offsets, define
\begin{equation}
    R_\alpha(t)
    :=
    \frac{
        \alpha(t)^3-\alpha(0)^3
    }{t}
    \label{eq:alpha-rate-diagnostic}
\end{equation}
and
\begin{equation}
    R_\beta(t)
    :=
    \frac{
        -\log\!\left(
            |\beta(t)|/|\beta(0)|
        \right)
    }{
        \alpha(t)^2-\alpha(0)^2
    }.
    \label{eq:beta-rate-diagnostic}
\end{equation}
The asymptotic predictions are
\begin{equation}
    R_\alpha(t)\longrightarrow c_\alpha,
    \qquad
    R_\beta(t)\longrightarrow c_\beta.
    \label{eq:dynamical-rate-predictions}
\end{equation}

For the Fisher-width law, set
\[
    c_d
    :=
    \frac{\sqrt6}{\pi}\,
    \mathbb E[\chi_{d-1}]
    =
    \frac{\sqrt{12}}{\pi}
    \frac{
        \Gamma(d/2)
    }{
        \Gamma((d-1)/2)
    }.
\]
The corresponding population diagnostic is
\begin{equation}
    R_{\mathrm{EoS}}(t)
    :=
    \frac{
        w_F(B_2^d;\theta(t))
    }{
        c_d\sqrt{L(\theta(t))}
    },
    \label{eq:dynamic-eos-ratio}
\end{equation}
for which Corollary~\ref{cor:dynamic-fisher-eos} predicts
\[
    R_{\mathrm{EoS}}(t)\longrightarrow1.
\]

\begin{table}[htbp]
\centering
\caption{Population diagnostics for dynamical selection and the
late-loss Fisher-width law. The predictions are fixed by
Theorem~\ref{thm:asymptotic-dynamical-selection} and
Corollary~\ref{cor:dynamic-fisher-eos}; the final column reports the
numerical integration at $t=5000$.}
\label{tab:dynamical-validation}
\begin{tabular}{lcc}
\toprule
Diagnostic & Asymptotic prediction & Value at $t=5000$ \\
\midrule
$R_\alpha(t)$
&
$\displaystyle
\frac{\pi^2}{2\sqrt{2\pi}}
\approx1.9687$
&
$1.9273$
\\[2mm]
$R_\beta(t)$
&
$\displaystyle
\frac3{\pi^2}
\approx0.30396$
&
$0.31072$
\\[2mm]
$R_{\mathrm{EoS}}(t)$
&
$1$
&
$0.999587$
\\
\bottomrule
\end{tabular}
\end{table}

The three diagnostics are consistent with the corresponding
asymptotic predictions. At $t=5000$,
$R_\alpha$ and $R_\beta$ are within a few percent of their limiting
constants, while the normalized Fisher-width ratio is already very
close to one:
\[
    R_{\mathrm{EoS}}(5000)
    =
    0.999587.
\]
These finite-time values are reported only as numerical checks of the
limits in Section~\ref{sec:gaussian-theory}; no convergence rate for
$R_\alpha$, $R_\beta$, or $R_{\mathrm{EoS}}$ is inferred from the
experiment.

\subsection{Held-Out Finite-Sample Diagnostic}
\label{sec:heldout-dynamic-diagnostic}

The preceding experiments use population quantities and therefore
directly address the theoretical predictions of
Section~\ref{sec:gaussian-theory}. A finite-sample trajectory requires
a different interpretation.

For a checkpoint trained only on
$\mathcal D_{\mathrm{tr}}$, define the held-out Euclidean-ball Fisher
width from \eqref{eq:evaluation-fisher} and consider
\begin{equation}
    \widehat R_{\mathrm{EoS}}(t)
    :=
    \frac{
        \widehat w_{F,\mathrm{ev}}
        (B_2^d;\theta_t)
    }{
        c_d
        \sqrt{
            \widehat L_{\mathrm{ev}}(\theta_t)
        }
    }.
    \label{eq:heldout-dynamic-eos-ratio}
\end{equation}
This quantity is evaluated only at checkpoints for which the
effective-sample condition
\eqref{eq:effective-sample-condition} remains meaningful.

Unlike $R_{\mathrm{EoS}}(t)$, the quantity
$\widehat R_{\mathrm{EoS}}(t)$ contains finite-sample error in both the
loss and Fisher width. Proposition~\ref{prop:pointwise-trace-concentration}
and Corollary~\ref{cor:finite-checkpoint-control} control the empirical
Fisher trace, not the full Gaussian-width ratio in
\eqref{eq:heldout-dynamic-eos-ratio}. We therefore use the held-out
ratio only as a descriptive finite-sample diagnostic. It is not used
to estimate the population constant $c_d$ or to strengthen
Corollary~\ref{cor:dynamic-fisher-eos}.

The experiments in this section consequently play separate roles. The
equal-loss quadrature calculation checks the global envelope and its
near-aligned expansion. The population-flow integration checks the
dynamical-selection constants and the resulting late-loss
Fisher-width law. The independent-evaluation construction provides a
finite-sample protocol with explicit trace control, but does not turn
the population asymptotic law into a trajectory-uniform finite-sample
theorem. Section~\ref{sec:beyond-controlled} next examines which parts
of the matched-loss description remain visible outside the Gaussian
linear setting.

\section{Beyond the Controlled Regime}
\label{sec:beyond-controlled}

The theory in Sections~\ref{sec:gaussian-theory} and
\ref{sec:finite-sample} relies on linear prediction, isotropic Gaussian
covariates, and the full Fisher matrix. We now examine a nonlinear
setting in which these assumptions no longer hold. The purpose is not
to extend the Gaussian--logistic theorems to neural networks, but to
test which parts of the empirical matched-loss description remain
visible after changing the architecture and Fisher representation.

The experiment gives two main observations. First, GD and SGD
remain close at matched loss, whereas Adam follows a substantially
displaced branch. Second, all fixed probes considered below have highly
correlated temporal shapes under the same diagonal model-Fisher
geometry. Thus optimizer dependence is the clearer boundary in this
experiment; the tested fixed probes do not provide a comparable
failure mode.

\subsection{MLP Setup and Diagonal Model-Fisher Approximation}
\label{sec:mlp-setup}

We use a two-hidden-layer ReLU network with widths $256$ and $128$ for
binary MNIST classification of digits $3$ and $5$. The network has
\[
    p=233{,}985
\]
trainable parameters. The input features are standardized using the
training binary dataset. We compare full-batch GD, mini-batch SGD with
batch size $256$, and Adam, with learning rates
\[
    \eta_{\mathrm{GD}}=0.1,
    \qquad
    \eta_{\mathrm{SGD}}=0.05,
    \qquad
    \eta_{\mathrm{Adam}}=10^{-3}.
\]
Adam uses
\[
    \beta_1=0.9,
    \qquad
    \beta_2=0.999,
\]
and training runs for $600$ iterations with gradient clipping at norm
$5$. Results are aggregated over six initializations.

Let $f_\theta(x)$ denote the binary logit and
\[
    p_\theta(x)=\sigma(f_\theta(x)).
\]
For the conditional Bernoulli model, the model Fisher matrix is
\[
    G(\theta)
    =
    \mathbb E_x\!\left[
        p_\theta(x)(1-p_\theta(x))
        \nabla_\theta f_\theta(x)
        \nabla_\theta f_\theta(x)^\top
    \right].
\]
This is the model Fisher obtained after taking the conditional
expectation over the model label distribution. It should be
distinguished from an empirical Fisher constructed from gradients of
the observed labels.

At the present parameter dimension the full $p\times p$ matrix is not
formed. Instead, on a fixed subset of $500$ training examples we use
the diagonal model-Fisher approximation
\begin{equation}
    \widehat G_{\mathrm{diag}}(\theta)
    :=
    \operatorname{diag}\left(
        \frac1{500}
        \sum_{i=1}^{500}
        p_i(1-p_i)
        \nabla_\theta f_\theta(x_i)
        \odot
        \nabla_\theta f_\theta(x_i)
    \right),
    \label{eq:mlp-diagonal-model-fisher}
\end{equation}
where $p_i=p_\theta(x_i)$. The same $500$ examples are used across
all recorded checkpoints, optimizers, and initializations. This subset
is used only to evaluate the model-Fisher geometry; it is not the
independent held-out sample of Section~\ref{sec:independent-evaluation}.

For a fixed compact probe $T\subset\mathbb R^p$, define
\begin{equation}
    \widehat w_{F,\mathrm{diag}}(T;\theta)
    :=
    w\!\left(
        \widehat G_{\mathrm{diag}}(\theta)^{1/2}T
    \right).
    \label{eq:diagonal-fisher-width}
\end{equation}
For the Euclidean ball,
\begin{equation}
    \widehat w_{F,\mathrm{diag}}(B_2^p;\theta)
    =
    \mathbb E_g
    \sqrt{
        \sum_{j=1}^p
        \widehat G_{\mathrm{diag},jj}(\theta)g_j^2
    },
    \qquad
    g\sim N(0,I_p).
    \label{eq:diagonal-ball-width}
\end{equation}

The diagonal Fisher matrix and the corresponding widths are evaluated
every five training iterations. Each width is estimated with $500$
Gaussian draws. A fixed bank of Gaussian draws is reused across
checkpoints, optimizers, and initializations. The structured probes
introduced below are also generated once and then held fixed across
the complete experiment.

Equation~\eqref{eq:diagonal-fisher-width} is an actual Gaussian width
under the diagonal model-Fisher approximation. It is not the
full-Fisher width. Consequently, the results in this section concern
the geometry induced by \eqref{eq:mlp-diagonal-model-fisher} and should
not be interpreted as statements about the full Fisher matrix of the
network.

\subsection{Optimizer-Dependent Matched-Loss Behaviour}
\label{sec:mlp-optimizer-branches}

We use the matched-loss protocol of
Section~\ref{sec:optimizer-comparison}. Each optimizer is compared with
GD only over the loss interval attained by both trajectories, and no
extrapolation is used. Widths are interpolated piecewise linearly
between the recorded Fisher checkpoints, and the multiplicative
displacement is summarized by the geometric mean
\eqref{eq:matched-loss-ratio}.

For mini-batch SGD,
\begin{equation}
    \overline\rho_{\mathrm{SGD}}^{\mathrm{MLP}}
    =
    1.017\pm0.009.
    \label{eq:mlp-sgd-ratio}
\end{equation}
Thus the GD and SGD diagonal model-Fisher ball widths remain close at
matched loss, with an average displacement of less than two percent.

For Adam,
\begin{equation}
    \overline\rho_{\mathrm{Adam}}^{\mathrm{MLP}}
    =
    0.847\pm0.012.
    \label{eq:mlp-adam-shared-ratio}
\end{equation}
The displacement from the GD branch is therefore much larger than for
SGD and is consistent across the six initializations.

For comparison, the controlled logistic experiment of
Section~\ref{sec:optimizer-comparison}, which uses the full
$30\times30$ Fisher matrix, gives
\begin{equation}
    \overline\rho_{\mathrm{Adam}}^{\mathrm{logistic}}
    =
    0.995\pm0.003.
    \label{eq:logistic-adam-shared-ratio}
\end{equation}

\begin{table}[htbp]
\centering
\caption{Matched-loss comparisons with the GD reference. The logistic
quantity uses the full model-Fisher width, whereas the MLP quantity
uses the diagonal model-Fisher ball width of
\eqref{eq:diagonal-ball-width}. Values are mean $\pm$ sample standard
deviation over six initializations.}
\label{tab:optimizer-branch-comparison}
\begin{tabular}{llcl}
\toprule
System & Optimizer & $\overline\rho$ & Observation \\
\midrule
Logistic
& Adam
& $0.995\pm0.003$
& close agreement
\\
MLP
& SGD
& $1.017\pm0.009$
& close agreement
\\
MLP
& Adam
& $0.847\pm0.012$
& distinct observed branch
\\
\bottomrule
\end{tabular}
\end{table}

The comparison separates two different empirical regimes. In the
controlled logistic problem, the tested optimizers give closely
agreeing full-Fisher-width curves after conditioning on loss. In the
MLP experiment, GD and SGD still remain close, but Adam does not. Thus
loss reparametrization does not remove optimizer dependence in this
nonlinear example.

The conclusion is deliberately limited. The Adam result establishes a
reproducible branch displacement for this network, training protocol,
and diagonal model-Fisher approximation. It does not imply that Adam
must produce a different full-Fisher branch in general, nor that GD and
SGD must agree for other architectures or optimization regimes.

\subsection{Fixed-Probe Temporal Stability}
\label{sec:mlp-probe-dependence}

We next ask whether different fixed probes exhibit similar temporal
evolution under the same diagonal model-Fisher geometry. We use
\begin{align}
    T_{\mathrm{ball}}
    &:=
    B_2^p,
    \label{eq:mlp-ball-probe}\\
    T_{\mathrm{sub}}
    &:=
    \operatorname{range}(Q)\cap B_2^p,
    \qquad
    Q^\top Q=I_8,
    \label{eq:mlp-subspace-probe}\\
    T_{\mathrm{sparse}}
    &:=
    \left\{
        v\in\mathbb R^p:
        \|v\|_0\leq256,\;
        \|v\|_2\leq1
    \right\},
    \label{eq:mlp-sparse-probe}\\
    T_{\mathrm{ell}}
    &:=
    \left\{
        D_{\mathrm{ell}}^{1/2}u:
        \|u\|_2\leq1
    \right\},
    \label{eq:mlp-ellipsoid-probe}
\end{align}
where $Q$ is a fixed random orthonormal $8$-dimensional subspace and
$D_{\mathrm{ell}}$ is a fixed positive diagonal matrix normalized so
that the ellipsoid has Euclidean radius one. All probe geometry is held
fixed across checkpoints and initializations.

For the subspace probe, if
$z\sim N(0,I_p)$ is transformed by the diagonal Fisher square root,
the supremum over
$T_{\mathrm{sub}}$ is the Euclidean norm of its projection onto the
fixed subspace. For the sparse probe, the supremum is the
$\ell_2$ norm of the $256$ largest coordinates in magnitude after the
same Fisher deformation. Thus all four observables are Gaussian widths
of the stated compact probe sets rather than finite-direction or trace
proxies.

For each probe we compute
\[
    \widehat w_{F,\mathrm{diag}}(T;\theta_t)
\]
at the same Fisher checkpoints. As a descriptive measure of temporal
shape agreement with the ball probe, define
\begin{equation}
    C_T
    :=
    \operatorname{Corr}\left(
        z\!\left(
            \widehat w_{F,\mathrm{diag}}(T;\theta_t)
        \right),
        z\!\left(
            \widehat w_{F,\mathrm{diag}}(B_2^p;\theta_t)
        \right)
    \right),
    \label{eq:mlp-probe-shape-correlation}
\end{equation}
where $z(\cdot)$ denotes standardization over the recorded GD
trajectory. This statistic measures temporal shape agreement only; it
is not a matched-loss equation-of-state statistic.

We separately measure variability across initializations at matched
loss. For each probe, the six GD trajectories are interpolated on their
common observed loss interval, and the coefficient of variation is
computed across initializations and then averaged over the common loss
grid, as in Section~\ref{sec:initialization-stability}.

\begin{table}[htbp]
\centering
\caption{Fixed-probe behaviour under the diagonal model-Fisher
geometry. Shape correlation measures temporal agreement with the
Euclidean-ball trajectory and is reported as mean $\pm$ sample
standard deviation over six initializations. Matched-loss CV measures
variability across the same six initializations.}
\label{tab:mlp-probe-dependence}
\begin{tabular}{lcc}
\toprule
Probe
& Shape correlation with $B_2^p$
& Matched-loss CV \\
\midrule
$B_2^p$
& ---
& $0.0367$
\\
Random $8$-dimensional subspace
& $0.99987\pm0.00010$
& $0.0325$
\\
$256$-sparse unit probe
& $0.99385\pm0.00641$
& $0.0261$
\\
Normalized ellipsoid
& $0.99991\pm0.00006$
& $0.0470$
\\
\bottomrule
\end{tabular}
\end{table}

All three structured probes have temporal trajectories that are highly
correlated with the Euclidean-ball trajectory. The random subspace and
ellipsoid are nearly indistinguishable from the ball at the level of
the standardized temporal statistic. The sparse probe has a slightly
lower and more variable correlation, but its mean remains above
$0.99$. It therefore does not provide a clear probe-dependent failure
mode in the experiment.

Matched-loss variability across initializations is also moderate:
the average coefficient of variation ranges from approximately
$2.6\%$ to $4.7\%$ across the four probes. These values are larger than
the initialization variability in the controlled logistic problem, but
the fixed-probe trajectories remain reproducible at the scale relevant
to the comparisons in this section.

The strong shape correlations are qualitatively consistent with the
fixed-probe picture of Section~\ref{sec:trace-shape}: several probes
can share much of the same temporal evolution when the underlying
normalized Fisher geometry changes coherently. However,
Theorem~\ref{thm:probe-stability} is not being applied here as a
numerical bound. The present experiment uses a diagonal approximation
to a neural-network model Fisher, and the correlation statistic in
\eqref{eq:mlp-probe-shape-correlation} is only a descriptive diagnostic.
In particular, the experiment does not establish uniform stability
over arbitrary, data-dependent, or adaptively selected probe classes.

\begin{figure}[t]
    \centering
    \includegraphics[width=\textwidth]{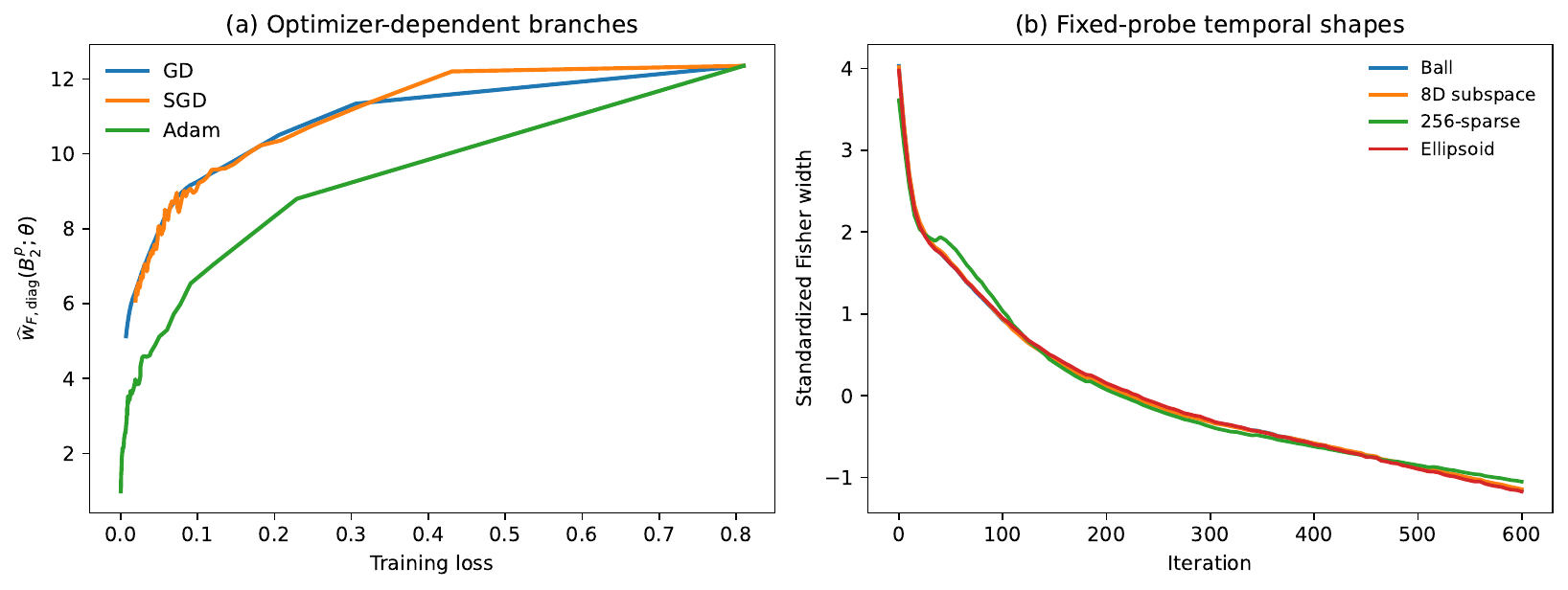}
    \caption{
    Nonlinear MLP diagnostics under the diagonal model-Fisher approximation.
    Left: GD and SGD remain close at matched loss, whereas Adam follows a
    visibly displaced branch. Right: standardized Fisher widths for four
    fixed probes along the GD trajectory retain closely similar temporal
    shapes. Numerical summaries over all six initializations are reported
    in Tables~\ref{tab:optimizer-branch-comparison}
    and~\ref{tab:mlp-probe-dependence}.
    }
    \label{fig:mlp-boundary}
\end{figure}

Figure~\ref{fig:mlp-boundary} summarizes these two empirical effects:
optimizer dependence is pronounced for Adam, whereas the fixed probes
tested retain highly similar temporal shapes.

\subsection{Scope of the Nonlinear Experiment}
\label{sec:mlp-scope}

The MLP experiment is an empirical boundary test rather than an
extension of the population theory. Within the tested diagonal
model-Fisher geometry, two conclusions are visible. First, the
matched-loss description is optimizer-dependent: GD and SGD remain
close, while Adam follows a substantially displaced branch. Second,
this optimizer dependence is not accompanied by strong temporal
disagreement among the fixed probes considered here; all three
structured probes track the Euclidean-ball trajectory closely.

These two observations should not be conflated. Probe-shape agreement
concerns the evolution of several fixed Gaussian-width observables
under one approximate Fisher geometry, whereas the Adam displacement
concerns the relation between Fisher width and loss across optimization
trajectories. Neither result establishes the corresponding behaviour
for the full neural-network Fisher matrix.

The nonlinear experiment therefore supports a branchwise rather than
universal interpretation of the empirical phenomenon. Some
loss-parametrized organization persists after leaving the controlled
Gaussian setting, but the branch itself can depend substantially on
the optimizer. At the same time, the fixed probes tested here retain a
high degree of temporal coherence. These observations define the
empirical scope of the present study rather than a general law for
neural-network training.

\section{Discussion and Conclusion}
\label{sec:discussion}

The results support a branchwise interpretation of loss-parametrized
Fisher width rather than a universal loss-only law. Loss alone does not
determine Fisher geometry throughout parameter space, even in the
controlled Gaussian--logistic model. What can emerge instead is a
distinguished branch selected by the geometry of the problem and by the
learning dynamics. The theory and experiments clarify this statement at
three different levels: what is proved exactly in the controlled model,
what is supported empirically beyond the exact theory, and what remains
open.

\paragraph{What is proved in the controlled model.}

In the Gaussian-teacher logistic population model, the Fisher matrix has
an exact radial one-spike structure, while the loss depends separately
on parameter norm and teacher alignment. This separation leads to the
global misalignment bound
\[
    L(\alpha,\beta)-L_\star(r)
    \geq
    \frac12 b(r)\beta^2,
    \qquad
    r=\sqrt{\alpha^2+\beta^2}.
\]
Consequently, on every loss level below $\log 2$, the teacher-aligned
state has the smallest parameter norm and maximizes both the Fisher
trace and the Euclidean-ball Fisher width. The aligned
loss-parametrized relation is therefore an upper envelope on the
level set, not an identity shared by all parameters with the same loss.
In particular, equal loss can correspond to different Fisher widths.

The population dynamics explain why this extremal branch is relevant
to learning. Under gradient flow,
\[
    \alpha(t)\to\infty,
    \qquad
    \beta(t)\to0,
    \qquad
    \frac{\alpha(t)}{\|\theta(t)\|_2}\to1.
\]
More precisely,
\[
    \frac{\alpha(t)^3}{t}
    \longrightarrow
    \frac{\pi^2}{2\sqrt{2\pi}},
\]
and, when $\beta(0)\neq0$,
\[
    \frac{\log|\beta(t)|}{\alpha(t)^2}
    \longrightarrow
    -\frac3{\pi^2}.
\]
Thus population gradient flow asymptotically selects the aligned
envelope rather than merely remaining near a generic low-loss region.

Combining this dynamical selection with the aligned low-loss Fisher
asymptotics yields, for $d\geq2$,
\begin{equation}
    \frac{
        w_F(B_2^d;\theta(t))
    }{
        \sqrt{L(\theta(t))}
    }
    \longrightarrow
    c_d,
    \qquad
    c_d
    =
    \frac{\sqrt6}{\pi}\,
    \mathbb E[\chi_{d-1}].
    \label{eq:discussion-dynamic-eos}
\end{equation}
The relation therefore holds asymptotically along the actual
population gradient-flow trajectory, rather than only after restricting
the parameter to the aligned ray.

The near-aligned analysis describes finite deviations from this branch.
For a general fixed compact probe, the matched-loss Fisher-width
residual is bounded by $O(|\beta|)$ on compact aligned-coordinate
intervals. For orthogonally invariant probes, the orientation effect
cancels and the bound improves to $O(\beta^2)$. The linear order for a
general probe is an upper bound and is not claimed to be sharp.

A separate deterministic result isolates a more general geometric
mechanism:
\[
    w_F(T;\theta)
    =
    \sqrt{\operatorname{Tr}G(\theta)}\,
    w\!\left(\overline G(\theta)^{1/2}T\right).
\]
For any fixed compact probe, perturbation of the normalized
square-root Fisher matrix controls the normalized width through
\[
    \chi_T
    =
    \frac{\gamma_dR_T}{a_T}.
\]
This factorization does not relate Fisher trace to loss and therefore
does not itself produce a loss-parametrized branch. It only separates a
common Fisher scale from probe-dependent normalized geometry.

The finite-sample theory has a similarly specific scope. For an
independent Gaussian evaluation sample, the empirical Fisher trace
admits a pointwise approximation whose error depends on the effective
sample size $nm(r)$. Conditional on a trained trajectory, the same
control extends simultaneously to any finite collection of recorded
checkpoints by a union bound. No uniform concentration result over a
continuous data-dependent trajectory is claimed.

\paragraph{What is supported empirically.}

The numerical experiments test these pre-existing theoretical
predictions rather than infer them from the data. In the controlled
logistic experiment, GD, damped NGD, noisy GD, and Adam produce closely
agreeing full-Fisher-width curves when compared over their shared
observed loss intervals. The same matched-loss quantity is highly
stable across the small random initializations considered.

The population calculations independently check the structural
predictions of the Gaussian theory. Equal-loss non-aligned states lie
below the aligned Fisher-width envelope, and the quantitative
misalignment ratio approaches the non-unit local value predicted by
the near-aligned expansion. Numerical integration of the exact
population flow is also consistent with the asymptotic constants for
the aligned and orthogonal coordinates, while the normalized
Fisher-width ratio approaches the limit in
\eqref{eq:discussion-dynamic-eos}. These calculations serve as
validation of the analytic results, not as estimates from which the
theory is derived.

The nonlinear experiment gives a different picture. For the
two-hidden-layer MLP, the full Fisher matrix is not formed; the
observable is a Gaussian width under a diagonal approximation to the
model Fisher. Within this approximation, GD and mini-batch SGD remain
close at matched loss, whereas Adam follows a substantially displaced
branch. Thus loss reparametrization does not remove optimizer dependence
in this setting.

At the same time, the fixed probes tested in the MLP exhibit very
similar temporal shapes. The Euclidean ball, random subspace, sparse
unit probe, and normalized ellipsoid remain strongly correlated over
training, and their matched-loss variability across initializations is
moderate. In particular, the experiment does not support the
earlier interpretation of the sparse probe as a clear failure mode.
Within the tested family, optimizer dependence is more pronounced than
probe dependence.

These observations should remain separate from the exact theory. The
MLP results concern one architecture, one training protocol, and a
diagonal model-Fisher approximation. They do not establish the same
behaviour for the full Fisher matrix, arbitrary neural networks, or
arbitrary probe classes.

\paragraph{What remains open.}

Several questions are left unresolved by the present analysis. The
global envelope and dynamical-selection arguments rely strongly on
isotropic Gaussian covariates and a noiseless linear teacher. It remains
to determine which parts survive under anisotropic covariates, label
noise, misspecification, or more general population models. In
particular, the radial Fisher structure used here is special and need
not persist once rotational symmetry is lost.

At the finite-sample level, the current theory controls independent
evaluation at fixed parameters or finitely many recorded checkpoints.
A trajectory-uniform result would require concentration along an
adaptive path and would have to account explicitly for the decay of the
effective Fisher sample size in the low-loss regime.

For neural networks, the main unresolved question is whether the
branchwise behaviour observed under the diagonal model Fisher persists
under richer approximations or the full Fisher geometry. Structured
metrics such as block or Kronecker-factored approximations would provide
an intermediate setting between the tractable diagonal experiment and
the full matrix. The strong temporal agreement among the fixed probes
considered here also leaves open the behaviour of more localized,
data-dependent, or adaptively selected probes, which are outside the
fixed-probe stability theorem.

The term \emph{equation of state} is therefore used only as shorthand
for a reproducible relation between aggregate observables along a
specified family of trajectories. The main conclusion is not that loss
globally determines Fisher width. Rather, the controlled model shows a
more limited mechanism:
\[
    \text{level-set envelope}
    \quad\longrightarrow\quad
    \text{dynamical branch selection}
    \quad\longrightarrow\quad
    \text{loss-parametrized Fisher-width asymptotics}.
\]
Beyond that model, the experiments indicate that part of this
organization can persist, but the selected branch can depend on the
optimizer and on how Fisher geometry is represented. The resulting
picture is therefore branchwise rather than universal.

\section*{Acknowledgments}

The author acknowledges the use of ChatGPT and Claude for editorial
assistance and limited support in preparing numerical experiments. The
author independently checked the mathematical arguments, experimental
designs, numerical results, and source code, and takes responsibility
for the final content of the manuscript.

\paragraph{Code availability.}
Code for the controlled logistic experiments, population-flow
calculations, and MLP diagonal model-Fisher experiments is publicly
available at \\
\texttt{https://github.com/vukhacky/fisher-width-dynamics}. \\
The repository includes the
random seeds and configurations used to reproduce the reported
numerical results.

\bibliographystyle{plainnat}
\bibliography{references}

\appendix
\section{Technical Proofs}
\label{app:technical-proofs}

This appendix provides the technical proofs deferred from the main
text: the Gaussian-width perturbation bound used in
Section~\ref{sec:trace-shape} and the finite-sample Fisher-trace
concentration result of Section~\ref{sec:finite-sample}. The population
results of Section~\ref{sec:gaussian-theory} are proved in the main
text.

\subsection{Proof of the Probe-Stability Bound}
\label{app:trace-shape-proofs}

\begin{proof}[Proof of Theorem~\ref{thm:probe-stability}]
Let $T\subset\mathbb R^d$ be nonempty and compact, and write
\[
    R_T
    :=
    \sup_{v\in T}\|v\|_2.
\]
For fixed $g\in\mathbb R^d$ and linear maps $A,B$,
\[
\begin{aligned}
&
\left|
    \sup_{v\in T}\langle g,Av\rangle
    -
    \sup_{v\in T}\langle g,Bv\rangle
\right|
\\
&\qquad\leq
\sup_{v\in T}
\left|
    \langle g,(A-B)v\rangle
\right|
\\
&\qquad\leq
R_T\|A-B\|_{\mathrm{op}}\|g\|_2.
\end{aligned}
\]
Taking expectation over $g\sim N(0,I_d)$ gives
\begin{equation}
    |w(AT)-w(BT)|
    \leq
    \gamma_dR_T\|A-B\|_{\mathrm{op}},
    \qquad
    \gamma_d:=\mathbb E\|g\|_2.
    \label{eq:app-width-perturbation}
\end{equation}

Now write
\[
    G_t=s_t\overline G_t,
    \qquad
    s_t=\operatorname{Tr}G_t.
\]
By the trace--shape identity,
\[
    \frac{w_F(T;\theta_t)}{\sqrt{s_t}}
    =
    w\!\left(\overline G_t^{1/2}T\right).
\]
Applying \eqref{eq:app-width-perturbation} with
\[
    A=\overline G_t^{1/2},
    \qquad
    B=\overline G_\star^{1/2},
\]
and
\[
    a_T
    :=
    w\!\left(\overline G_\star^{1/2}T\right)
\]
yields
\[
    \left|
        \frac{w_F(T;\theta_t)}{\sqrt{s_t}}
        -
        a_T
    \right|
    \leq
    \gamma_dR_T
    \left\|
        \overline G_t^{1/2}
        -
        \overline G_\star^{1/2}
    \right\|_{\mathrm{op}}.
\]
This is \eqref{eq:absolute-normalized-probe-bound}. If $a_T>0$,
division by $a_T$ gives \eqref{eq:relative-probe-bound}.
\end{proof}

\subsection{Details for the Finite-Sample Trace Bound}
\label{app:finite-sample-details}

We prove Proposition~\ref{prop:pointwise-trace-concentration}.
Fix $\theta\in\mathbb R^d$ independently of the evaluation sample.
By rotational invariance, take
\[
    \theta=re_1
\]
and write
\[
    X_i=(Z_i,U_i),
    \qquad
    Z_i\sim N(0,1),
    \qquad
    U_i\sim N(0,I_{d-1}),
\]
with $Z_i$ and $U_i$ independent. Set
\[
    v_i:=v(rZ_i),
    \qquad
    m:=m(r)=\mathbb E[v(rZ)].
\]

From \eqref{eq:trace-variance-residual},
\begin{equation}
\begin{aligned}
&
\operatorname{Tr}\widehat G_n(\theta)
-
d\,\widehat m_n(\theta)
\\
&\qquad=
\frac1n
\sum_{i=1}^n
\left(
    Q_i+S_i
\right),
\end{aligned}
\label{eq:app-trace-decomposition}
\end{equation}
where
\[
    Q_i
    :=
    v_i(Z_i^2-1)
\]
and
\[
    S_i
    :=
    v_i
    \left(
        \|U_i\|_2^2-(d-1)
    \right).
\]
Since $\mathbb E[S_i]=0$,
\[
\begin{aligned}
    \mathbb E[Q_i+S_i]
    &=
    \mathbb E[v(rZ)(Z^2-1)]
    \\
    &=
    a(r)-b(r)
    \\
    &=
    m(r)(\kappa(r)-1),
\end{aligned}
\]
which proves \eqref{eq:trace-bias}.

\paragraph{Longitudinal fluctuation.}

Since $0\leq v\leq1/4$,
\[
    |Q_i|
    \leq
    \frac14|Z_i^2-1|,
\]
so $Q_i$ has a uniformly bounded sub-exponential norm. Moreover,
using $v^2\leq v/4$,
\[
\begin{aligned}
    \operatorname{Var}(Q_i)
    &\leq
    \mathbb E[
        v(rZ)^2(Z^2-1)^2
    ]
    \\
    &\leq
    \frac14
    \mathbb E[
        v(rZ)(Z^2-1)^2
    ].
\end{aligned}
\]

Since $Z^4$ is increasing in $|Z|$, whereas $v(rZ)$ is decreasing in
$|Z|$, the covariance inequality for oppositely monotone functions of
$|Z|$ gives
\[
    \mathbb E[v(rZ)Z^4]
    \leq
    \mathbb E[v(rZ)]\mathbb E[Z^4]
    =
    3m.
\]
Hence
\[
\begin{aligned}
    \mathbb E[
        v(rZ)(Z^2-1)^2
    ]
    &=
    \mathbb E[v(rZ)Z^4]
    -
    2a(r)
    +
    m
    \\
    &\leq
    4m,
\end{aligned}
\]
and therefore
\[
    \operatorname{Var}(Q_i)\leq m.
\]

Bernstein's inequality gives, with probability at least
$1-\delta/2$,
\begin{equation}
    \left|
        \frac1n
        \sum_{i=1}^n
        \left(
            Q_i-\mathbb E Q_i
        \right)
    \right|
    \leq
    C
    \left[
        \sqrt{
            \frac{
                m\log(4/\delta)
            }{n}
        }
        +
        \frac{\log(4/\delta)}{n}
    \right].
    \label{eq:app-longitudinal-bound}
\end{equation}

\paragraph{Transverse fluctuation.}

Conditionally on $Z_1,\ldots,Z_n$,
\[
    S_i
    =
    \sum_{j=1}^{d-1}
    v_i(U_{ij}^2-1).
\]
The variables
\[
    \left\{
        v_i(U_{ij}^2-1)
    \right\}_{i,j}
\]
are conditionally independent and centered. Since
\[
    \|U_{ij}^2-1\|_{\psi_1}\leq C
\]
and $0\leq v_i\leq1/4$, they also have uniformly bounded conditional
sub-exponential norms. Their conditional variance proxy satisfies
\[
    C(d-1)\sum_{i=1}^n v_i^2
    \leq
    Cd\sum_{i=1}^n v_i,
\]
again using $v_i^2\leq v_i/4$.

Define the event
\[
    \mathcal A
    :=
    \left\{
        \frac1n
        \sum_{i=1}^n v_i
        \leq
        2m
    \right\}.
\]
Since $0\leq v_i\leq1/4$ and
\[
    \operatorname{Var}(v_i)
    \leq
    \mathbb E[v_i^2]
    \leq
    \frac{m}{4},
\]
Bernstein's inequality gives
\begin{equation}
    \mathbb P(\mathcal A^c)
    \leq
    e^{-cnm}
    \label{eq:app-weight-event}
\end{equation}
after adjusting the universal constant $c$.

On $\mathcal A$, conditional Bernstein gives, with conditional
probability at least $1-\delta/2$,
\begin{equation}
    \left|
        \frac1n
        \sum_{i=1}^nS_i
    \right|
    \leq
    C
    \left[
        \sqrt{
            \frac{
                dm\log(4/\delta)
            }{n}
        }
        +
        \frac{\log(4/\delta)}{n}
    \right].
    \label{eq:app-transverse-bound}
\end{equation}

Combining
\eqref{eq:app-longitudinal-bound} and
\eqref{eq:app-transverse-bound}, and enlarging the universal constants,
gives
\begin{align}
&
\left|
    \operatorname{Tr}\widehat G_n(\theta)
    -
    d\,\widehat m_n(\theta)
    -
    m(r)(\kappa(r)-1)
\right|
\nonumber\\
&\qquad\leq
C
\left[
    \sqrt{
        \frac{
            d\,m(r)\log(4/\delta)
        }{n}
    }
    +
    \frac{\log(4/\delta)}{n}
\right]
\label{eq:app-absolute-trace-bound}
\end{align}
with probability at least
\[
    1-\delta-2e^{-cnm(r)}.
\]
This proves the absolute bound in
Proposition~\ref{prop:pointwise-trace-concentration}.

\paragraph{Relative bound.}

A further Bernstein inequality gives
\begin{equation}
    \mathbb P\left(
        \widehat m_n(\theta)
        <
        \frac12m(r)
    \right)
    \leq
    2e^{-cnm(r)}.
    \label{eq:app-m-lower-tail}
\end{equation}
On the complement of this event,
\[
    d\,\widehat m_n(\theta)
    \geq
    \frac12dm(r).
\]
Since
\[
    |m(r)(\kappa(r)-1)|
    \leq
    m(r),
\]
division of \eqref{eq:app-absolute-trace-bound} by
$d\,\widehat m_n(\theta)$ gives
\begin{align}
\left|
    \frac{
        \operatorname{Tr}\widehat G_n(\theta)
    }{
        d\,\widehat m_n(\theta)
    }
    -1
\right|
\leq
C\left[
    \frac1d
    +
    \sqrt{
        \frac{
            \log(4/\delta)
        }{
            ndm(r)
        }
    }
    +
    \frac{
        \log(4/\delta)
    }{
        ndm(r)
    }
\right].
\label{eq:app-relative-trace-bound}
\end{align}
Combining the failure probabilities gives probability at least
\[
    1-\delta-4e^{-cnm(r)}.
\]
This proves \eqref{eq:relative-trace-concentration}.

\paragraph{Finite checkpoints.}

For Corollary~\ref{cor:finite-checkpoint-control}, condition on the
training sample $\mathcal D_{\mathrm{tr}}$. The recorded parameters
\[
    \theta_{t_1},\ldots,\theta_{t_N}
\]
are then fixed and independent of the evaluation sample. Apply the
absolute pointwise bound to checkpoint $j$ with failure probability
$\delta/N$. Since
\[
    m(\|\theta_{t_j}\|_2)
    \geq
    m_{\min},
\]
a union bound gives simultaneous control of all $N$ checkpoints with
probability at least
\[
    1-\delta
    -
    2N e^{-c n_{\mathrm{ev}}m_{\min}},
\]
and replaces $\log(4/\delta)$ by
\[
    \log\left(\frac{4N}{\delta}\right).
\]
This is \eqref{eq:finite-checkpoint-bound}.
\end{document}